\documentclass{article}

\PassOptionsToPackage{numbers}{natbib} 

 \usepackage[final]{neurips_2019}
\usepackage[utf8]{inputenc} 
\usepackage[T1]{fontenc}    
\usepackage{hyperref}       
\usepackage{url}            
\usepackage{booktabs}       
\usepackage{amsfonts}       
\usepackage{nicefrac}       
\usepackage{microtype}      
\usepackage[noend]{algcompatible}
\usepackage{algorithm}
\usepackage{subfigure}
\usepackage{amsmath}
\usepackage{amsfonts}
\usepackage{amssymb}
\usepackage{amsthm}
\usepackage{mathtools}
\usepackage{graphicx}
\usepackage{wrapfig}
\usepackage{etoolbox}

\title{The Option Keyboard \\ Combining Skills in Reinforcement Learning}

\author{%
  {\bf Andr\'e Barreto}, 
  {\bf Diana Borsa}, 
  {\bf Shaobo Hou}, 
  {\bf Gheorghe Comanici}, 
  {\bf Eser Ayg\"un},    \\
  {\bf Philippe Hamel}, 
  {\bf Daniel Toyama}, 
  {\bf Jonathan Hunt}, 
  {\bf Shibl Mourad}, 
  {\bf David Silver}, 
  {\bf Doina Precup}  \vspace{1.0mm}  \\
   \texttt{\small \{andrebarreto,borsa,shaobohou,gcomanici,eser\}@google.com} \\
   \texttt{\small \{hamelphi,kenjitoyama,jjhunt,shibl,davidsilver,doinap\}@google.com}  \vspace{1.5mm} \\
  DeepMind }

\newcommand{\cp}{\citep}
\newcommand{\ca}{\citeauthor}

\newcommand{\ct}{\citet}
\newcommand{\ctp}[1]{\ca{#1}'s~\cp{#1}}


\newcommand{\mat}[1]{\ensuremath{\boldsymbol{{#1}}}}

\newcommand{\defi}{\ensuremath{\equiv}}
\newcommand{\cm}{\ensuremath{c}}
\newcommand{\Cm}{\ensuremath{C}}
\newcommand{\C}{\ensuremath{\mathcal{C}}}
\renewcommand{\H}{\ensuremath{\mathcal{H}}}

\newcommand{\ecm}{\ensuremath{e}}
\newcommand{\api}{\ensuremath{\omega}}

\newcommand{\Ec}{\ensuremath{\mathcal{E}}}
\newcommand{\curr}{\ensuremath{\mathrm{last}}}
\newcommand{\length}{\ensuremath{\mathrm{length}}}

\newtheorem{proposition}{Proposition}

\newcommand{\R}{\ensuremath{\mathbb{R}}}

\newcommand{\E}{\ensuremath{\mathbb{E}}}

\newcommand{\w}{\mat{w}}

\renewcommand{\t}{\ensuremath{\top}}
\newcommand{\y}{\mat{y}}

\newcommand{\argmax}{\ensuremath{\mathrm{argmax}}}
\renewcommand{\min}{\ensuremath{\mathrm{min}}}
\newcommand{\nulls}{\ensuremath{s_{\emptyset}}}

\newcommand{\Qt}{\ensuremath{\tilde{Q}}}
\newcommand{\Qec}{\ensuremath{\mathcal{Q}_{\Ec}}}

\newcommand{\la}{\ensuremath{\leftarrow}}

\renewcommand{\S}{\ensuremath{\mathcal{S}}}

\newcommand{\A}{\ensuremath{\mathcal{A}}}
\newcommand{\W}{\ensuremath{\mathcal{W}}}
\newcommand{\I}{\ensuremath{\mathcal{I}}}

\newcommand{\params}{\ensuremath{\mat{\theta}}}

\newcommand{\ind}{\ensuremath{\mat{1}}}

\newcommand{\vv}{\ensuremath{\mat{v}}}
\newcommand{\vx}{\ensuremath{\mat{x}}}
\newcommand{\vy}{\ensuremath{\mat{y}}}
\newcommand{\vtheta}{\mat{\theta}}

\algrenewcommand\algorithmicloop{\textbf{repeat forever}}

\graphicspath{{fig/}}

\begin{document}

\maketitle

\begin{abstract}
The ability to combine known skills to create new ones may be crucial in the solution of complex reinforcement learning problems that unfold over extended periods. We argue that a robust way of combining skills is to define and manipulate them in the space of pseudo-rewards (or ``cumulants''). Based on this premise, we propose a framework for combining skills using the formalism of options. We show that every deterministic option can be unambiguously represented as a cumulant defined in an extended domain. Building on this insight and on previous results on transfer learning, we show how to approximate options whose cumulants are linear combinations of the cumulants of known options. This means that, once we have learned options associated with a set of cumulants, we can instantaneously synthesise options induced by any linear combination of them, without any learning involved. We describe how this framework provides a hierarchical interface to the environment whose abstract actions correspond to combinations of basic skills. We demonstrate the practical benefits of our approach in a resource management problem and a navigation task involving a quadrupedal simulated robot. 
\end{abstract}

\section{Introduction}
\label{sec:introduction}

In reinforcement learning (RL) an agent takes actions in an environment in order to maximise the amount of reward received in the long run~\cp{sutton2018reinforcement}. This textbook definition of RL treats actions as atomic decisions made by the agent at every time step. Recently, \ct{sutton2016towards} proposed a new view on action selection. In order to illustrate the potential benefits of his proposal \ca{sutton2016towards} resorts to the following analogy. Imagine that the interface between agent and environment is a \emph{piano keyboard}, with each key corresponding to a possible action. Conventionally the agent plays one key at a time and each note lasts exactly one unit of time. If we expect our agents to do something akin to playing music, we must generalise this interface in two ways. First, we ought to allow notes to be arbitrarily long---that is, we must replace actions with \emph{skills}. Second, we should be able to also play \emph{chords}. 

The argument in favour of temporally-extended courses of actions has repeatedly been made in the literature: in fact, the notion that agents should be able to reason at multiple temporal scales is one of the pillars of hierarchical RL~\cp{dayan92feudal,parr97reinforcement,sutton99between,dietterich2000hierarchical,kaelbling2014hierarchical}. The insight that the agent should have the ability to \emph{combine} the resulting skills is a far less explored idea. This is the focus of the current work. 

The possibility of combining skills replaces a monolithic action set with a combinatorial counterpart: by learning a small set of basic skills (``keys'') the agent should be able to perform a potentially very large number of combined skills (``chords''). For example, an agent that can both walk and grasp an object should be able to walk while grasping an object without having to learn a new skill. According to \ct{sutton2016towards}, this combinatorial action selection process ``could be the key to generating behaviour with a good mix of preplanned coherence and sensitivity to the current situation.'' 

But how exactly should one combine skills? One possibility is to combine them in the space of policies: for example, if we look at policies as distribution over actions, a combination of skills can be defined as a mixture of the corresponding distributions. One can also combine parametric policies by manipulating the corresponding parameters. Although these are feasible solutions, they fail to capture possible \emph{intentions} behind the skills. Suppose the agent is able to perform two skills that can be associated with the same objective---distinct ways of grasping an object, say. It is not difficult to see how combinations of the corresponding behaviours can completely fail to accomplish the common goal. We argue that a more robust way of combining skills is to do so directly in the goal space, using pseudo-rewards or \emph{cumulants}~\cp{sutton2018reinforcement}. If we associate each skill with a cumulant, we can combine the former by manipulating the latter. This allows us to go beyond the direct prescription of behaviours, working instead in the space of intentions.  

Combining skills in the space of cumulants poses two challenges. First, we must establish a well-defined mapping between cumulants  and skills. Second, once a combined cumulant is defined, we must be able to perform the associated skill without having to go through the slow process of learning it. We propose to tackle the former by adopting \emph{options} as our formalism to define skills~\cp{sutton99between}. We show that there is a large subset of the space of options, composed of deterministic options, in which every element can be unambiguously represented as a cumulant defined in an extended domain. Building on this insight, we extend \ca{barreto2017successor}'s \cp{barreto2017successor,barreto2018transfer} previous results on transfer learning to show how to approximate options whose cumulants are linear combinations of the cumulants of known options. This means that, once the agent has learned options associated with a collection of cumulants, it can instantaneously synthesise options induced by \emph{any} linear combination of them, \emph{without any learning involved}. Thus, by learning a small set of options, the agent instantaneously has at its disposal a potentially enormous number of combined options. Since we are combining cumulants, and not policies, the resulting options will be truly novel, meaning that they cannot, in general, be directly implemented as a simple alternation of their constituents.

We describe how our framework provides a flexible interface with the environment whose abstract actions correspond to combinations of basic skills. As a reference to the motivating analogy described above, we call this interface the \emph{option keyboard}. We discuss the merits of the option keyboard at the conceptual level and demonstrate its practical benefits in two experiments: a resource management problem and a realistic navigation task involving a quadrupedal robot simulated in MuJoCo~\cp{todorov2012mujoco,schulman2016high}.

\section{Background}
\label{sec:background}

As usual, we assume the interaction between agent and environment can be modelled as a \emph{Markov decision process} (MDP)~\cp{puterman94markov}. An MDP is a tuple $M \defi (\S,\A,p,r,\gamma)$, where $\S$ and $\A$ are the state and action spaces, $p(\cdot|s,a)$ gives the next-state distribution upon taking action $a$ in $s$, $r: \S \times \A \times \S \mapsto \R$ specifies the reward associated with the transition $s \xrightarrow{a} s'$, and $\gamma \in [0,1)$ is the discount factor. 

The objective of the agent is to find a \emph{policy} $\pi: \S \mapsto \A$ that maximises the expected \emph{return}
$
G_{t} \defi \sum_{i=0}^{\infty} \gamma^{i} R_{t+i},
$
where $R_t = r(S_t, A_t, S_{t+1})$.
A principled way to address this problem is to use methods derived from dynamic programming, which usually compute the \emph{action-value function} of a policy $\pi$ as:
$
\label{eq:Q}
Q^{\pi}(s,a) \defi \E^{\pi} \left[ G_{t} | S_t = s, A_t = a\right],
$
where $\E^{\pi}[\cdot]$ denotes expectation over the transitions induced by $\pi$~\cp{puterman94markov}. The computation of $Q^{\pi}(s,a)$ is called {\em policy evaluation}. Once $\pi$ has been evaluated, we can compute a greedy policy 
\begin{equation}
\label{eq:policy_improvement}
\pi'(s) \in \argmax_{a} Q^{\pi}(s,a) \; \text{ for all } s \in \S. 
\end{equation}
It can be shown that $Q^{\pi'}(s,a) \ge Q^{\pi}(s,a)$ for all $(s,a) \in \S \times \A$, and hence the computation of $\pi'$ is referred to as \emph{policy improvement}. The alternation between policy evaluation and policy improvement is at the core of many RL algorithms, which usually carry out these steps approximately. Here we will use a tilde over a symbol to indicate that the associated quantity is an approximation ({\sl e.g.}, $\Qt^\pi \approx Q^\pi$).

\subsection{Generalising policy evaluation and policy improvement}
\label{sec:gpe_gpi}

Following~\ct{sutton2018reinforcement}, we call any signal defined as $c: \S \times \A \times \S \mapsto \R$ a \emph{cumulant}. Analogously to the conventional value function $Q^\pi$, we define $Q^\pi_c$ as the expected discounted sum of cumulant $c$ under policy $\pi$~\cp{sutton2011horde}. Given a policy $\pi$ and a set of cumulants \C, we call the evaluation of $\pi$ under all $c \in \C$ \emph{generalised policy evaluation} (GPE)~\cp{barreto2020fast}. \ct{barreto2017successor,barreto2018transfer} propose an efficient form of GPE based on \emph{successor features}: they show that, given cumulants $c_1, c_2, ..., c_d$, for any $\cm = \sum_i w_i \cm_i$, with $\w \in \R^ d$,
{ 
\begin{equation}
 \label{eq:gpe}
Q^{\pi}_{\cm}(s,a) 
\defi \E^{\pi} \left[ \sum_{k=0}^{\infty} \gamma^k \sum_{i=1}^{d} w_i \Cm_{i, t + k} | S_t = s, A_t = a \right] \\
= \sum_{i=1}^{d} w_i Q^{\pi}_{\cm_i}(s,a),
\end{equation}
} \hspace{-2mm}
where $\Cm_{i, t} \defi c_i(S_t, A_t, R_t)$. Thus, once we have computed $Q^\pi_{\cm_1}, Q^\pi_{\cm_1}, ..., Q^\pi_{\cm_d}$, we can instantaneously evaluate $\pi$ under any cumulant in the set $\C \defi \{c = \sum_i w_i c_i \,|\, \w \in \R^d\}$.

Policy improvement can also be generalised. In \ctp{barreto2017successor} \emph{generalised policy improvement} (GPI) the improved policy is computed based on a set of value functions. Let $Q^{\pi_1}_{\cm}, Q^{\pi_2}_{\cm}, ... Q^{\pi_n}_{\cm}$ be the action-value functions of $n$ policies $\pi_i$ under cumulant $\cm$, and let $Q^{\max}_{\cm}(s,a) = \max_i Q^{\pi_i}_{\cm}(s,a)$ for all $(s,a) \in \S \times \A$. If we define 
\begin{equation}
\label{eq:gpi}
\pi(s) \in \argmax_a Q^{\max}_{\cm}(s,a) \text{ for all } s \in \S,
\end{equation}
then $Q^{\pi}_{\cm}(s,a) \ge Q^{\max}_{\cm}(s,a)$ for all $(s,a) \in \S \times \A$. This is a strict generalisation of standard policy improvement~(\ref{eq:policy_improvement}). The guarantee extends to the case in which GPI uses approximations $\Qt^{\pi_i}_{\cm}$~\cp{barreto2017successor}.

\subsection{Temporal abstraction via options}
\label{sec:options}

As discussed in the introduction, one way to get temporal abstraction is through the concept of \emph{options}~\cp{sutton99between}. Options are temporally-extended courses of actions. In their more general formulation, options can depend on the entire \emph{history} between the time $t$ when they were initiated and the current time step $t+k$, $h_{t:t+k} \defi s_t a_t s_{t+1}... a_{t+k-1} s_{t+k} $. Let \H\ be the space of all possible histories; a \emph{semi-Markov option} is a tuple $o \defi (\I_o, \pi_o, \beta_o)$ where $\I_o \subset \S$ is the set of states where the option can be initiated, $\pi_o: \H \mapsto \A$ is a policy over histories, and $\beta_o: \H \mapsto [0,1]$ gives the probability that the option  terminates after history $h$ has been observed~\cp{sutton99between}. It is worth emphasising that semi-Markov options depend on the history since their initiation, but not before.

\section{Combining options}
\label{sec:combining}
\vspace{-2mm}

In the previous section we discussed how several key concepts in RL can be generalised: rewards with cumulants, policy evaluation with GPE, policy improvement with GPI, and actions with options. In this section we discuss how these concepts can be used to combine skills.

\subsection{The relation between options and cumulants}
\label{sec:options_cumulants}

We start by showing that there is a subset of the space of options in which every option can be unequivocally represented as a cumulant defined in an extended domain.  

First we look at the relation between policies and cumulants. Given an MDP $(\S,\A, p, \cdot, \gamma)$, we say that a cumulant $\cm_{\pi}: \S \times \A \times \S \mapsto \R$ \emph{induces} a policy $\pi: \S \mapsto \A$ if $\pi$ is optimal for the MDP $(\S,\A, p, \cm_{\pi}, \gamma)$. We can always define a cumulant $\cm_{\pi}$ that induces a given policy $\pi$. For instance, if we make 
\begin{equation}
\label{eq:cumulant_pi}
\cm_{\pi}(s, a, \cdot) = \left\{\begin{array}{l}
                      0 \text{ if } a = \pi(s); \\
                      z \text { otherwise, }
                     \end{array}\right.
\end{equation}
where $z < 0$, it is clear that $\pi$ is the only policy that achieves the maximum possible value $Q^{\pi}(s,a) = Q^{*}(s,a) = 0$ on all $(s,a) \in \S \times \A$. In general, the relation between policies and cumulants is a many-to-many mapping. First, there is more than one cumulant that induces the same policy: for example, any $z < 0$ in~(\ref{eq:cumulant_pi}) will clearly lead to the same policy $\pi$. There is thus an infinite set of cumulants $\C_{\pi}$ associated with $\pi$. Conversely, although this is not the case in~(\ref{eq:cumulant_pi}), the same cumulant can give rise to multiple policies if more than one action achieves the maximum in (\ref{eq:policy_improvement}). 

Given the above, we can use any cumulant $\cm_{\pi} \in \C_{\pi}$ to refer to policy $\pi$. In order to extend this possibility to options $o = (\I_o, \pi_o, \beta_o)$ we need two things. First, we must define cumulants in the space of histories \H. This will allow us to induce semi-Markov policies $\pi_o: \H \mapsto \A$ in a way that is analogous to~(\ref{eq:cumulant_pi}). Second, we need cumulants that also induce the initiation set $\I_o$ and the termination function $\beta_o$. We propose to accomplish this by augmenting the action space. 

Let $\tau$ be a \emph{termination action} that terminates option $o$ much like the termination function $\beta_o$. We can think of $\tau$ as a fictitious action and model it by defining an augmented action space $\A^{+} \defi \A \cup \{\tau\}$. When the agent is executing an option $o$, selecting action $\tau$ immediately terminates it. We now show that if we extend the definition of cumulants to also include $\tau$ we can have the resulting cumulant induce not only the option's policy but also its initiation set and termination function. Let $\ecm: \H \times \A^{+} \times \S \mapsto \R$ be an \emph{extended cumulant}. Since \ecm\ is defined over the augmented action space, for each $h \in \H$ we now have a \emph{termination bonus} $\ecm(h, \tau, s) = \ecm(h, \tau)$ that determines the value of interrupting option $o$ after having observed $h$. The extended cumulant $\ecm$ induces an \emph{augmented policy} $\api_{\ecm}: \H \mapsto \A^{+}$ in the same sense that a standard cumulant induces a policy (that is, $\api_{\ecm}$ is an optimal policy for the derived MDP whose state space is \H\ and the action space is $\A^{+}$; see Appendix~\ref{sec:theory} for details). We argue that $\api_{\ecm}$ is equivalent to an option $o_{\ecm} \equiv (\I_{\ecm},\pi_{\ecm}, \beta_{\ecm})$  whose components are defined as follows.  The policy $\pi_{\ecm}: \H \mapsto \A$ coincides with $\api_{\ecm}$ whenever the latter selects an action in \A. The termination function is given by
\begin{equation}
\label{eq:termination}
\beta_{\ecm}(h) = \left\{ \begin{array}{l}
			  1  \; \text{ if } \ecm(h,\tau) > \max_{a \ne \tau} Q^{\api_{\ecm}}_{\ecm}(h,a), \\
			  0  \; \text{ otherwise.} 
                          \end{array}\right.
\end{equation}
In words, the agent will terminate after $h$ if the instantaneous termination bonus $\ecm(h,\tau)$ is larger than the maximum expected discounted sum of cumulant $\ecm$ under policy $\api_{\ecm}$. Note that when $h$ is a single state $s$, no concrete action has been executed by the option yet, hence it terminates with $\tau$ immediately after its initiation. This is precisely the definition of the initialisation set $\I_{\ecm} \defi \{s \,|\, \beta_{\ecm}(s) = 0\}$.

Termination functions like~(\ref{eq:termination}) are always deterministic. This means that 
extended cumulants $\ecm$ can only represent options $o_{\ecm}$ in which $\beta_{\ecm}$ is a mapping $\H \mapsto \{0,1\}$. In fact, it is possible to show that all options of this type, which we will call \emph{deterministic options}, are representable as an extended cumulant $\ecm$, as formalised in the following proposition (proof in Appendix~\ref{sec:theory}):

\begin{proposition}
\label{teo:det_opt_cumulants}
Every extended cumulant induces at least one deterministic option, and every deterministic option can be unambiguously induced by an infinite number of extended cumulants.
\end{proposition}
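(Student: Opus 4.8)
The plan is to prove the two directions of the biconditional separately, since they are logically distinct claims bundled into one proposition.

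For the first direction---that every extended cumulant $\ecm$ induces at least one deterministic option---I would proceed constructively, essentially verifying that the option $o_{\ecm} = (\I_{\ecm}, \pi_{\ecm}, \beta_{\ecm})$ built in the text is well-defined and deterministic. The key observation is that $\ecm$ defines a derived MDP on state space $\H$ with action space $\A^{+}$, and by standard MDP theory this derived MDP admits at least one optimal (deterministic, stationary-in-$\H$) augmented policy $\api_{\ecm}$. From $\api_{\ecm}$ I would read off the three components: $\pi_{\ecm}$ as the restriction of $\api_{\ecm}$ to histories where it selects a genuine action in $\A$; $\beta_{\ecm}$ via~(\ref{eq:termination}); and $\I_{\ecm} = \{s \mid \beta_{\ecm}(s) = 0\}$. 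The main thing to check here is \emph{consistency}: that the termination rule~(\ref{eq:termination}) is exactly equivalent to ``$\api_{\ecm}$ selects $\tau$.'' This follows because in the derived MDP the optimal action at $h$ is $\tau$ precisely when the termination bonus $\ecm(h,\tau)$ weakly exceeds the best continuation value $\max_{a \ne \tau} Q^{\api_{\ecm}}_{\ecm}(h,a)$; the strict inequality in~(\ref{eq:termination}) just fixes a tie-breaking convention, which is where the phrase ``at least one'' in the statement earns its keep. Since $\beta_{\ecm}$ takes values in $\{0,1\}$ by construction, the induced option is deterministic.

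For the converse---that every deterministic option $o$ is induced by infinitely many extended cumulants---I would argue constructively as well, mirroring the single-policy construction in~(\ref{eq:cumulant_pi}) but now lifted to the augmented, history-dependent setting. Given $o = (\I_o, \pi_o, \beta_o)$ with $\beta_o: \H \mapsto \{0,1\}$, I would define an extended cumulant $\ecm$ that penalises any deviation from $\pi_o$ on non-terminating histories and that sets the termination bonus $\ecm(h,\tau)$ high exactly on the histories where $\beta_o(h) = 1$ (and on states outside $\I_o$) and low where $\beta_o(h) = 0$. Concretely, one assigns reward $0$ to the action prescribed by $\pi_o$, a penalty $z < 0$ to every competing action in $\A$, and chooses $\ecm(h,\tau)$ to sit above the continuation value when termination is desired and below it otherwise. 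One then verifies that the augmented policy $\api_{\ecm}$ optimal for this $\ecm$ reproduces $o$ exactly, which recovers $\pi_o$, $\beta_o$, and hence $\I_o$. The infinitude is immediate and parallels the many-to-one remark following~(\ref{eq:cumulant_pi}): any rescaling of the penalty $z$, or any perturbation of the termination bonuses that preserves the relevant inequalities, yields a distinct cumulant inducing the same option.

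\textbf{The main obstacle} I anticipate is handling the \emph{semi-Markov, history-dependent} structure rigorously rather than the MDP-theoretic boilerplate. Because $\pi_o$ and $\beta_o$ are defined on the full history space $\H$, the ``derived MDP'' whose state is a history is genuinely different from a finite-state MDP, and I would need to be careful that $\api_{\ecm}$ is well-defined as an optimal policy over this enriched state space and that the value $Q^{\api_{\ecm}}_{\ecm}(h,a)$ appearing in~(\ref{eq:termination}) is well-posed. The subtlety is that the option terminates only when $\tau$ is chosen, so the continuation value must be computed \emph{within} the option's own sub-process, and I would want to confirm that the recursion defining these values is consistent and that the inequality in~(\ref{eq:termination}) correctly captures the optimal stopping decision. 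Getting this stopping-time formulation exactly right---so that the word ``unambiguously'' in the statement is justified---is where the real care lies; the penalty-construction and the infinitude claim are comparatively routine once the history-MDP framework is set up.
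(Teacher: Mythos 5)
Your proposal is correct and follows essentially the same route as the paper's proof: the first direction is established by passing to the derived MDP over histories with the augmented action set (the paper makes this explicit by adding an absorbing state $\nulls$ reached via $\tau$) and reading off $(\I_{\ecm},\pi_{\ecm},\beta_{\ecm})$ from the optimal value function, and the converse uses exactly the penalty construction you describe---cumulant $0$ on the actions and terminations prescribed by $o$ (including $\tau$ on $\S \setminus \I_o$) and $z<0$ otherwise, with infinitude from varying $z$. Your remark about the tie-breaking convention is apt, since the paper's proof resolves ties in favour of $\tau$ (via $\tau \in \argmax_a Q^{*}_{\hat{\ecm}}(h,a)$) rather than the strict inequality of~(\ref{eq:termination}).
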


\subsection{Synthesising options using GPE and GPI}
\label{sec:combining_options}

In the previous section we looked at the relation between extended cumulants and deterministic options; we now build on this connection to use GPE and GPI to combine options.

Let $\Ec \defi \{\ecm_1, \ecm_2, ..., \ecm_d\}$ be a set of extended cumulants. We know that $\ecm_{i}: \H \times \A^{+} \times \S \mapsto \R$ is associated with deterministic option $o_{\ecm_i} \defi \api_{\ecm_i}$. As with any other cumulant, the extended cumulants $\ecm_{i}$ can be linearly combined; it then follows that, for any $\w \in \R^d$, $\ecm = \sum_i w_i \ecm_i$ defines a new deterministic option $o_{\ecm} \defi \api_{\ecm}$. Interestingly, the termination function of $o_{\ecm}$ has the form~(\ref{eq:termination}) with termination bonuses defined as
$
\label{eq:linear_comb}
\ecm(h,\tau) = \sum_i w_i \ecm_i(h,\tau) .
$
This means that the combined option $o_\ecm$ ``inherits'' its termination function from its constituents $o_{\ecm_i}$. Since any $\w \in \R^d$ defines an option $o_\ecm$, the set \Ec\ can give rise to a very large number of combined options.

The problem is of course that for each $\w \in \R^d$ we have to actually compute the resulting option $\api_{\ecm}$. This is where GPE and GPI come to the rescue. 
Suppose we have the values of options $\api_{\ecm_i}$ under all the cumulants $\ecm_1, \ecm_2, ..., \ecm_d$. With this information, and analogously to~(\ref{eq:gpe}), we can use the fast form of GPE provided by successor features to compute the value of $\api_{\ecm_j}$ with respect to $\ecm$:
 \begin{equation}
\label{eq:gpe_ecm}
Q^{\api_{\ecm_j}}_{\ecm}(h,a) = \sum_i w_i Q^{\api_{\ecm_j}}_{\ecm_i}(h,a) .
\end{equation}
Now that we have all the options $\api_{\ecm_j}$ evaluated under $\ecm$, we can merge them to generate a new option that does at least as well as, and in general better than, all of them. This is done by applying GPI over the value functions $Q^{\api_{\ecm_j}}_{\ecm}$:
\begin{equation}
\label{eq:gpi_ecm}
\tilde{\api}_\ecm(h) \in \argmax_{a \in \A^+} \max\nolimits_j Q^{\api_{\ecm_j}}_{\ecm} (h,a) .
\end{equation}
From previous theoretical results we know that 
$
\label{eq:lower_bound_gpi}
\max_{j} Q^{\api_{\ecm_j}}_{\ecm}(h,a) \le Q^{\tilde{\api}_{\ecm}}_{\ecm}(h,a) \le Q^{\api_{\ecm}}_{\ecm}(h,a)
$
for all $(h, a) \in \H \times \A^+$~\cp{barreto2017successor}. In words, this means that, even though the GPI option $\tilde{\api}_{\ecm}$ is not necessarily optimal, following it will in general result in a higher return in terms of cumulant \ecm\ than if the agent were to execute any of the known options $\api_{\ecm_j}$. Thus, we can use $\tilde{\api}_\ecm$ as an approximation to $\api_{\ecm}$ \emph{that requires no additional learning}. It is worth mentioning that the action selected by the combined option in~(\ref{eq:gpi_ecm}), $\tilde{\api}_{\ecm}(h)$, can be different from $\api_{\ecm_i}(h)$ for all $i$---that is, the resulting policy cannot, in general, be implemented as an alternation of its constituents. This highlights the fact that combining cumulants is not the same as defining a higher-level policy over the associated options. 

In summary, given a set of cumulants $\Ec$, we can combine them by picking weights $\w$ and computing the resulting cumulant $\ecm = \sum_i w_i \ecm_i$. This can be interpreted as determining how desirable or undesirable each cumulant is. Going back to the example in the introduction, suppose that $\ecm_1$ is associated with walking and $\ecm_2$ is associated with grasping an object. Then, cumulant $\ecm_1 + \ecm_2$ will reinforce both behaviours, and will be particularly rewarding when they are executed together. In contrast, cumulant $\ecm_1 - \ecm_2$ will induce an option that avoids grasping objects, favouring the walking behaviour in isolation and even possibly inhibiting it. Since the resulting option aims at maximising a combination of the cumulants $\ecm_i$, it can itself be seen as a combination of the options $o_{\ecm_i}$.

\section{Learning with combined options}
\label{sec:learning_combined_options}

\begin{wrapfigure}{r}{0.55\textwidth}
\vspace{-5mm}
\begin{center}
\includegraphics[scale=0.75]{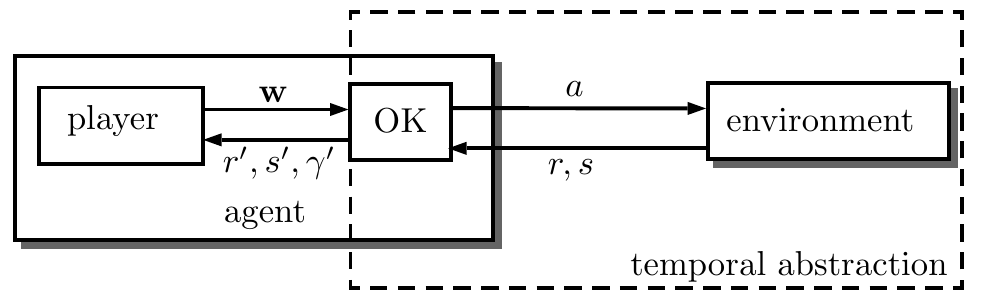}
\end{center}
\caption{OK mediates the interaction between player and environment. The exchange of information between OK and the environment happens at every time step. The interaction between player and OK only happens ``inside'' the agent when the termination action $\tau$ is selected by GPE and GPI (see Algorithms~\ref{alg:ok} and~\ref{alg:ok_player}).
 \label{fig:ok_diagram} }
\vspace{-5mm}
\end{wrapfigure}

Given a set of extended cumulants \Ec, in order to be able to combine the associated options using GPE and GPI one only needs the value functions $\Qec \defi \{\Qt^{\api_{\ecm_i}}_{\ecm_j} \; | \; \forall (i,j) \in \{1, 2, ..., d\}^2\}$.  
The set \Qec\ can be constructed using standard RL 
methods; for an illustration of how to do it with $Q$-learning see Algorithm~\ref{alg:build_ok} in App.~\ref{sec:pseudo_codes}. 

As discussed, once \Qec\ has been computed we can use GPE and GPI to synthesise options on the fly. In this case the newly-generated options are fully determined by the vector of weights $\w \in \R^d$. Conceptually, we can think of this process as an interface between an RL algorithm and the environment: the algorithm selects a vector $\w$, hands it over to GPE and GPI, and ``waits'' until the action returned by~(\ref{eq:gpi_ecm}) is the termination action $\tau$. Once $\tau$ has been selected, the algorithm picks a new $\w$, and so on. The RL method is thus interacting with the environment at a higher level of abstraction in which actions are combined skills defined by the vectors of weights \w. Returning to the analogy with a piano keyboard described in the introduction, we can think of each option $\api_{\ecm_i}$ as a ``key'' that can be activated by an instantiation of \w\ whose only non-zero entry is $w_i > 0$. Combined options associated with more general instantiations of \w\ would correspond to ``chords''. We will thus call the layer of temporal abstraction between algorithm and environment the \emph{option keyboard} (OK). We will also generically refer to the RL method interacting with OK as the ``player''. Figure~\ref{fig:ok_diagram} shows how an RL agent can be broken into a player and an OK.

Algorithm~\ref{alg:ok} shows a generic implementation of OK. Given a set of value functions \Qec\ and a vector of weights $\w$, OK will execute the actions selected by GPE and GPI until the termination action is picked or a terminal state is reached. During this process OK keeps track of the discounted reward accumulated in the interaction with the environment (line~\ref{it:sum_return}), which will be returned to the player when the interaction terminates (line~\ref{it:return}). As the options $\api_{\ecm_i}$ may depend on the entire trajectory since their initiation, OK uses an update function $u(h,a,s')$ that retains the parts of the history that are actually relevant for decision making (line~\ref{it:update_function}). For example, if OK is based on Markov options only, one can simply use the update function $u(h,a,s') = s'$.

The set \Qec\ defines a specific instantiation of OK; once an OK is in place any conventional RL method can interact with it as if it were the environment. As an illustration, Algorithm~\ref{alg:ok_player} shows how a keyboard player that uses a finite set of combined options $\W \defi \{\w_1, \w_2, ..., \w_n \}$ can be implemented using standard $Q$-learning by simply replacing the environment with OK. It is worth pointing out that if we substitute any other set of weight vectors $\W'$ for $\W$ we can still use the same OK, without the need to relearn the value functions in \Qec. We can even use sets of abstract actions \W\ that are infinite---as long as the OK player can deal with continuous action spaces~\cp{williams92simple,sutton2000policy,silver2014deterministic}.

Although the clear separation between OK and its player is instructive, in practice the boundary between the two may be more blurry. For example, if the player is allowed to intervene in all interactions between OK and environment, one can implement useful strategies like option interruption~\cp{sutton99between}. Finally, note that although we have been treating the construction of OK (Algorithm~\ref{alg:build_ok}) and its use (Algorithms~\ref{alg:ok} and \ref{alg:ok_player}) as events that do not overlap in time, nothing keeps us from carrying out the two procedures in parallel, like in similar methods in the literature~\cp{bacon2017option,vezhnevets2017feudal}. 

\vspace{-4mm}
\begin{tabular}{cc}
\begin{minipage}[c]{0.47\textwidth}
\begin{algorithm}[H]
   \caption{Option Keyboard (OK)} 
   \label{alg:ok}
\begin{algorithmic}[1]
\REQUIRE
$\left\{\begin{array}{ll}
s \in \S & \text{current state} \\
\w \in \R^d & \text{vector of weights} \\
\Qec\ & \text{value functions} \\ 
\gamma \in [0,1) & \text{discount rate}
\end{array}
\right.$
\STATE $h \la s$; $\;$ $r' \la 0$; $\;$ $\gamma' \la 1$
\REPEAT
\STATE {\small $a \la \argmax_{a'} \max_i [\sum_j w_j \Qt^{\api_{\ecm_i}}_{\ecm_j}(h,a') ]$}
\IF{$a \ne \tau$}
\STATE execute action $a$ and observe $r$ and $s'$
\STATE $r' \la r' + \gamma' r$ \label{it:sum_return}
\STATE {\bf if} $s'$ is terminal $\gamma' \la 0$ {\bf else} $\gamma' \la \gamma'\gamma$
\STATE $h \la u(h, a, s')$ \label{it:update_function}
\ENDIF
\UNTIL{$a = \tau$ {\bf or} $s'$ is terminal}
\STATE {\bf return} $s', r', \gamma'$ \label{it:return}
\vspace{1mm}
\end{algorithmic}
\end{algorithm}
\end{minipage}
&
\begin{minipage}[c]{0.49\textwidth}
\begin{algorithm}[H]
   \caption{$Q$-learning keyboard player} 
   \label{alg:ok_player}
\begin{algorithmic}[1]
\REQUIRE
$\left\{\begin{array}{ll}
\text{OK} & \text{option keyboard} \\
\W  & \text{combined options} \\
\Qec & \text{value functions} \\ 
\alpha, \epsilon, \gamma \in \R& \text{hyper-parameters} \\ 
\end{array}
\right.$
\STATE create $\Qt(s, \w)$ parametrised by $\params_Q$
\STATE select initial state $s \in \S$
\LOOP
\STATE {\bf if} Bernoulli($\epsilon$)=1 {\bf then} 
$\w \la$ Uniform($\W$) 
\STATE {\bf else} $\w \la \argmax_{\w' \in \W} \Qt(s, \w')$
\STATE $(s', r', \gamma') \la $ OK($s$, $\w$, \Qec, $\gamma$)  
\STATE $\delta \la r' + \gamma' \max_{\w'} \Qt(s', \w') - \Qt(s, \w)$ \label{it:td}
\STATE  {$\params_Q \la \params_Q + \alpha \delta \nabla_{\params_Q} \Qt(s,\w)$} 
\COMMENT{update $\Qt$ \label{it:learn_qt}}
\STATE {\bf if} $s'$ is terminal {\bf then } select initial $s \in \S$ 
\STATE {\bf else} $s \la s'$
\ENDLOOP
\end{algorithmic}
\end{algorithm}
\end{minipage}
\end{tabular}

\section{Experiments}
\label{sec:experiments}

We now present our experimental results illustrating the benefits of OK in practice. Additional details, along with further results and analysis, can be found in Appendix~\ref*{sec:details_experiments}.

\vspace{-3mm}
\subsection{Foraging world}
\label{sec:foraging}
\vspace{-2mm}

The goal in the foraging world is to manage a set of resources by navigating in a grid world and picking up items containing the resources in different proportions. For illustrative purposes we will consider that the resources are nutrients and the items are food. The agent's challenge is to stay healthy by keeping its nutrients within certain bounds. The agent navigates in the grid world using the four usual actions: up, down, left, and right. Upon collecting a food item the agent's nutrients are increased according to the type of food ingested. Importantly, the quantity of each nutrient decreases by a fixed amount at every step, so the desirability of different types of food changes even if no food is consumed. Observations are images representing the configuration of the grid plus a vector indicating how much of each nutrient the agent currently has (see Appendix~\ref{sec:details_foraging_world} for a technical description). 

What makes the foraging world particularly challenging is the fact that the agent has to \emph{travel} towards the items to pick them up, adding a spatial aspect to an already complex management problem. The dual nature of the problem also makes it potentially amenable to be tackled with options, since we can design skills that seek specific nutrients and then treat the problem as a management task in which actions are preferences over nutrients. However, the number of options needed can increase exponentially fast. If at any given moment the agent wants, does not want, or does not care about each nutrient, we need $3^m$ options to cover the entire space of preferences, where $m$ is the number of nutrients. This is a typical situation where being able to combine skills can be invaluable.

As an illustration, in our experiments we used $m=2$ nutrients and $3$ types of food. We defined a cumulant $\ecm_i \in \Ec$ associated with each nutrient as follows: $\ecm_i(h,a,s) = 0$ until a food item is consumed, when it becomes the increase in the associated nutrient. After a food item is consumed we have that $\ecm_i(h,a,s) = -\ind\{a \ne \tau\}$, where $\ind\{\cdot\}$ is the indicator function---this forces the induced option to terminate, and also illustrates how the definition of cumulants over histories $h$ can be useful (since single states would not be enough to determine whether the agent has consumed a food item). We used Algorithm~\ref{alg:build_ok} in Appendix~\ref{sec:pseudo_codes} to compute the $4$ value functions in \Qec. We then defined a $8$-dimensional abstract action space covering the space of preferences, $\W \defi \{-1,0,1\}^2 -\{ [0,0]\}$, and used it with the $Q$-learning player in Algorithm~\ref{alg:ok_player}. We also consider $Q$-learning using only the $2$ options maximizing each nutrient and a ``flat'' $Q$-learning agent that does not use options at all.
  
By modifying the target range of each nutrient we can create distinct scenarios with very different dynamics. Figure~\ref{fig:results_foraging_world} shows results in two such scenarios. Note how the relative performance of the two baselines changes dramatically from one scenario to the other, illustrating how the usefulness of options is highly context-dependent. Importantly, as shown by the results of the OK player, the ability to combine options in cumulant space makes it possible to synthesise useful behaviour from a given set of options even when they are not useful in isolation.

\begin{figure}
\centering
\vspace{-1mm}
\newcommand{\scl}{0.35}
\includegraphics[scale=\scl]{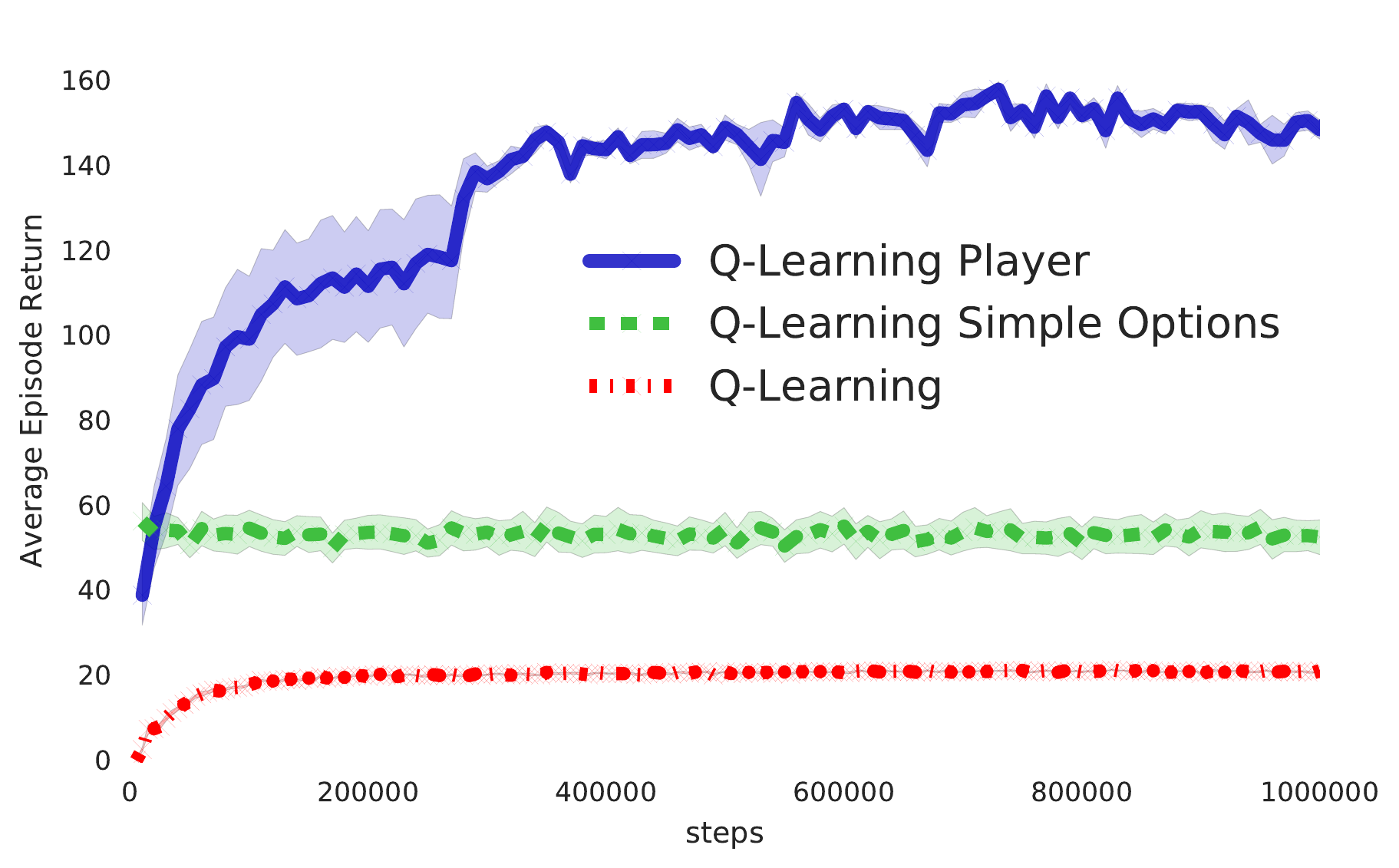}
\includegraphics[scale=\scl]{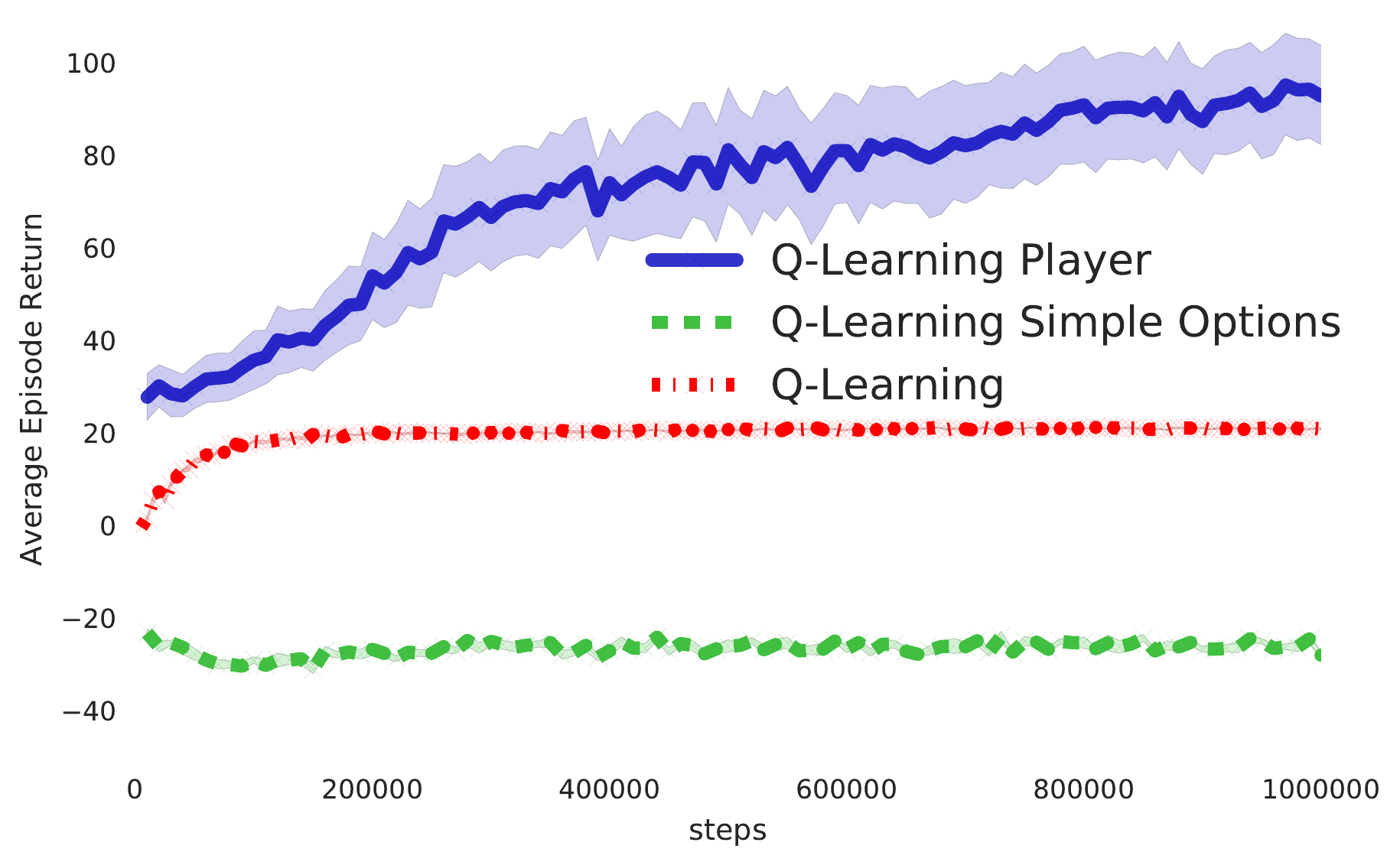}
\vspace{-3mm}
\caption{Results on the foraging world. The two plots correspond to different configurations of the environment (see Appendix~\ref{sec:details_foraging_world}). Shaded regions are one standard deviation over $10$ runs. \label{fig:results_foraging_world}}
\vspace{-4mm}
\end{figure}

\vspace{-2mm}
\subsection{Moving-target arena}
\label{sec:moving_target_arena}
\vspace{-2mm}

As the name suggests, in the moving-target arena the goal is to get to a target region whose location changes every time the agent reaches it. The arena is implemented as a square room with realistic dynamics defined in the MuJoCo physics engine~\cp{todorov2012mujoco}. The agent is a quadrupedal simulated robot with $8$ actuated degrees of freedom; actions are thus vectors in $[-1,1]^8$ indicating the torque applied to each joint~\cp{schulman2016high}. Observations are $29$-dimensional vectors with spatial and proprioception information (Appendix~\ref{sec:details_moving_target_arena}). The reward is always $0$ except when the agent reaches the target, when it is $1$. 
 
We defined cumulants in order to encourage the agent's displacement in certain directions. Let $\vv(h)$  be the vector of $(x,y)$ velocities of the agent after observing history $h$ (the velocity is part of the observation). Then, if we want the agent to travel at a certain direction \w\ for $k$ steps, we can define:
\begin{equation}
\label{eq:ecm_k_steps_policy}
\ecm_{\w}(h, a, \cdot) = \left\{\begin{array}{l}
                      \w^\t \vv(h) \text{ if } \length(h) \le k ; \\
                      -\ind\{a \ne \tau\} \text{ otherwise. }
                     \end{array}\right.
\end{equation}

The induced option will terminate after $k=8$ steps as a negative reward is incurred for all histories of length greater than $k$ and actions other than $\tau$. 
It turns out that even if a larger number of directions $\w$ is to be learned, we  only need to compute two value functions for each cumulant $\ecm_{\w}$. Since for all $\ecm_{\w} \in \Ec$ we have that $\ecm_{\w} = w_1 \ecm_{\vx} + w_2 \ecm_{\vy}$, where $\vx = [1,0]$ and $\vy = [0,1]$, we can use~(\ref{eq:gpe}) to decompose the value function of any option \api\ as $Q^{\api}_{\ecm_{\w}}(h,a) = w_1 Q^{\api}_{\ecm_{\vx}}(h,a) + w_2 Q^{\api}_{\ecm_{\vy}}(h,a) $. Hence, $|\Qec| = 2|\Ec|$,  resulting in a $2$-dimensional space \W\ in which $\w \in \R^2$ indicates the intended direction of locomotion. Thus, by learning a few options that move along specific directions, the agent is potentially able to synthesise options that travel in \emph{any} direction. 

For our experiments, we defined cumulants $\ecm_{\w}$ corresponding to the directions $0^o$, $120^o$, and $240^o$. To compute the set of value functions \Qec\ we used Algorithm~\ref{alg:build_ok} with $Q$-learning replaced by the deterministic policy gradient (DPG) algorithm~\cp{silver2014deterministic}. We then used the resulting OK with both discrete and continuous abstract-action spaces \W. For finite \W\ we adopted a $Q$-learning player (Algorithm~\ref{alg:ok_player}); in this case the abstract actions $\w_i$ correspond to $n \in\{4,6,8\}$ directions evenly-spaced in the unit circle. For continuous \W\ we used a DPG player. We compare OK's results with that of DPG applied directly in the original action space and also with $Q$-learning using only the three basic options.

Figure~\ref{fig:results_moving_target_arena} shows our results on the moving-target arena.  As one can see by DPG's results, solving the problem in the original action space is difficult because the occurrence of non-zero rewards may depend on a long sequence of lucky actions. When we replace actions with options we see a clear speed up in learning, even if we take into account the training of the options. If in addition we allow for combined options, we observe a significant boost in performance, as shown by the OK players' results. Here we see the expected trend: as we increase $|\W|$ the OK player takes longer to learn but achieves better final performance, as larger numbers of directional options allow for finer control.  

These results clearly illustrate the benefits of being able to combine skills, but how much is the agent actually using this ability? In Figure~\ref{fig:results_moving_target_arena} we show a histogram indicating how often combined options are used by OK to implement directions $\w \in \R^2$ across the state space (details in App.~\ref{sec:details_moving_target_arena}). As shown, for abstract actions \w\ close to $0^o$, $120^o$ and $240^o$ the agent relies mostly on the $3$ options trained to navigate along these directions, but as the intended direction of locomotion gets farther from these reference points combined options become crucial. This shows how the ability to combine skills can extend the range of behaviours available to an agent without the need for additional learning.\footnote{A video of the quadrupedal simulated robot being controlled by the DPG player can be found on the following link: \href{https://youtu.be/39Ye8cMyelQ}{https://youtu.be/39Ye8cMyelQ}.}

\begin{figure}
\centering
\includegraphics[scale=0.38]{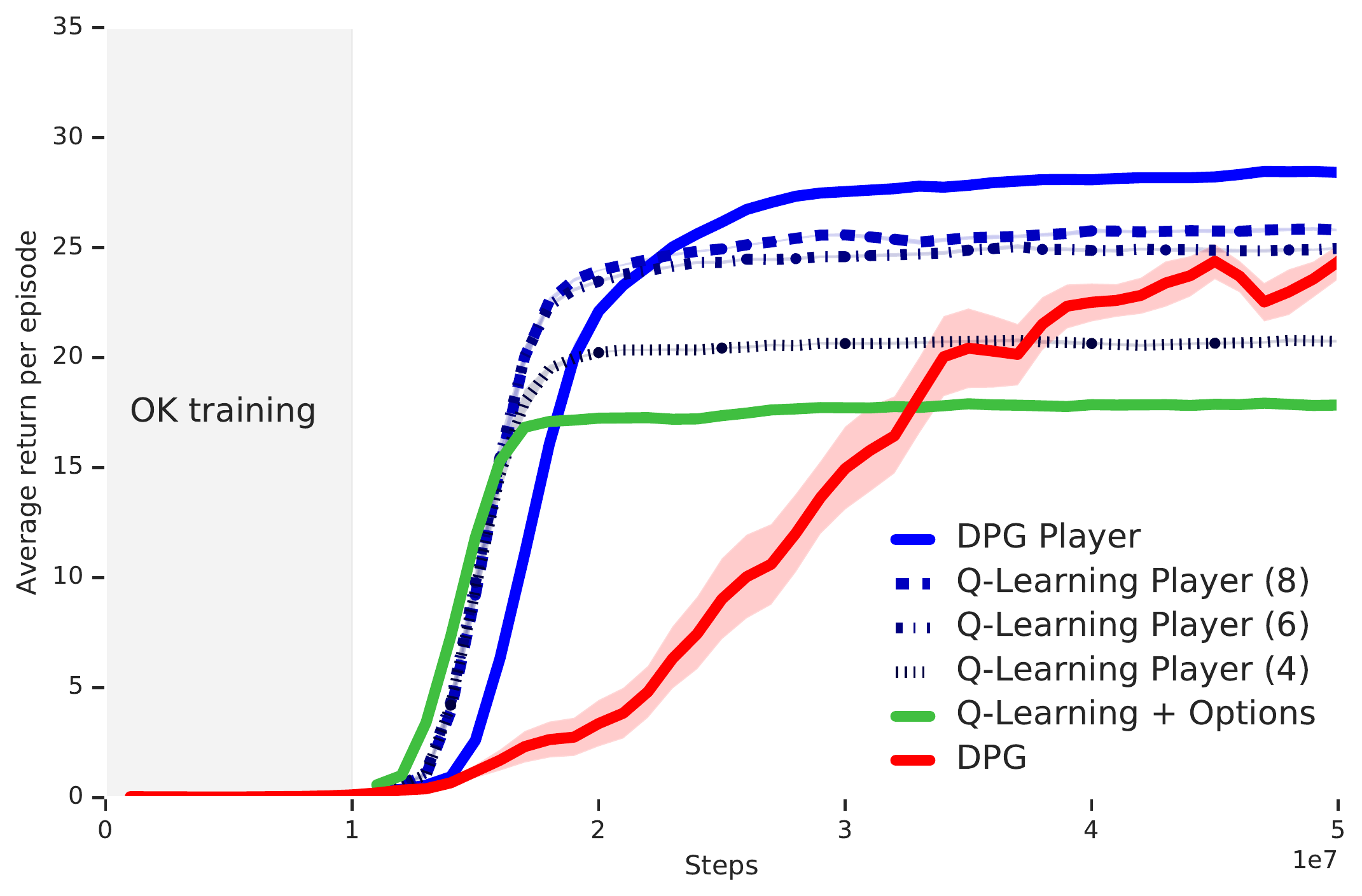}
 \raisebox{0.2\height}{\includegraphics[scale=0.17]{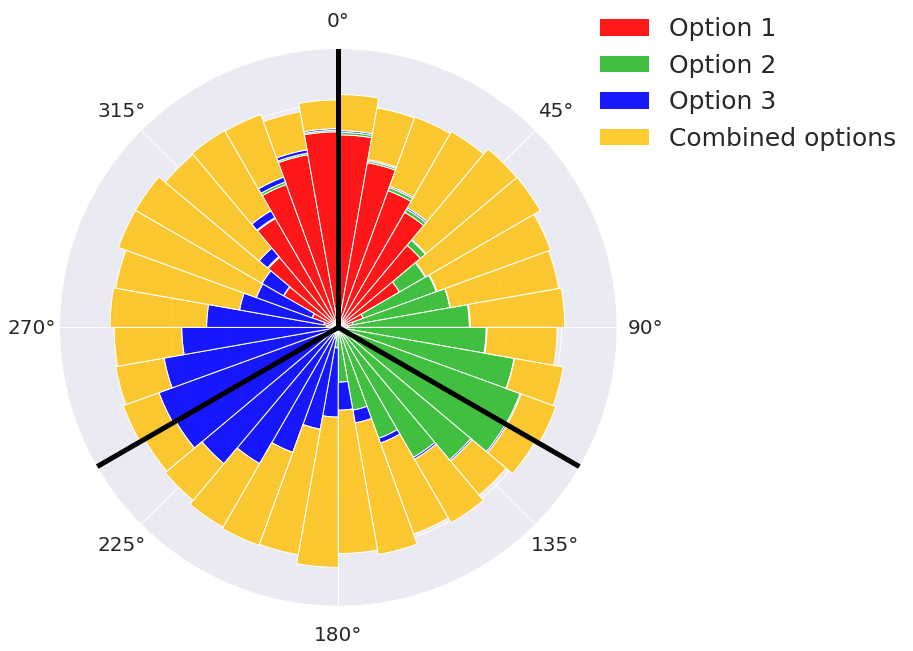}}
\caption{ \textbf{Left}: Results on the moving-target arena. All players used the \emph{same} keyboard, so they share the same OK training phase. Shaded regions are one standard deviation over $10$ runs.  \textbf{Right}: Histogram of options used by OK to implement directions $\w$. Black lines are the three basic options.  \label{fig:results_moving_target_arena}}
\vspace{-4mm}
\end{figure}

Even if one accepts the premise of this paper that skills should be combined in the space of cumulants, it is natural to ask whether other strategies could be used instead of GPE and GPI. Although we are not aware of any other algorithm that explicitly attempts to combine skills in the space of cumulants, there are methods that do so in the space of value functions~\cp{todorov2009compositionality,dasilva2009linear,haarnoja2018composable,hunt2018entropic}. \ct{haarnoja2018composable} propose a way of combining skills based on entropy-regularised value functions. Given a set of cumulants $\ecm_1, \ecm_2, ..., \ecm_d$, they propose to compute a skill associated with $\ecm = \sum_i w_i \ecm_i$ as follows: $\hat{\api}_\ecm(h) \in \argmax_{a \in \A^+} \sum_j w_j \hat{Q}^{\api_{\ecm_j}}_{\ecm_j} (h,a)$, where $\hat{Q}^{\api_{\ecm_j}}_{\ecm_j} (h,a)$ are entropy-regularised value functions and $w_j \in [-1,1]$. We will refer to this method as \emph{additive value composition} (AVC). 

How well does AVC perform as compared to GPE and GPI? In order to answer this question we reran the previous experiments but now using $\hat{\api}_\ecm(h)$ as defined above instead of the option $\tilde{\api}_\ecm(h)$ computed through~(\ref{eq:gpe_ecm}) and~(\ref{eq:gpi_ecm}). In order to adhere more closely to the assumptions underlying AVC, we also repeated the experiment using an entropy-regularised OK~ \cp{haarnoja2018soft} (App.~\ref{sec:details_moving_target_arena}). Figure~\ref{fig:comparison_moving_target_arena} shows the results. As indicated in the figure, GPE and GPI outperform AVC both with the standard and the entropy-regularised OK. A possible explanation for this is given in the accompanying polar scatter chart in Figure~\ref{fig:comparison_moving_target_arena}, which illustrates how much progress each method makes, over the state space, in all directions \w~(App.~\ref{sec:details_moving_target_arena}). The plot suggests that, in this domain, the directional options implemented through GPI and GPE are more effective in navigating along the desired directions (also see~\cp{hunt2018entropic}).

\begin{figure}
\centering
\includegraphics[scale=0.38]{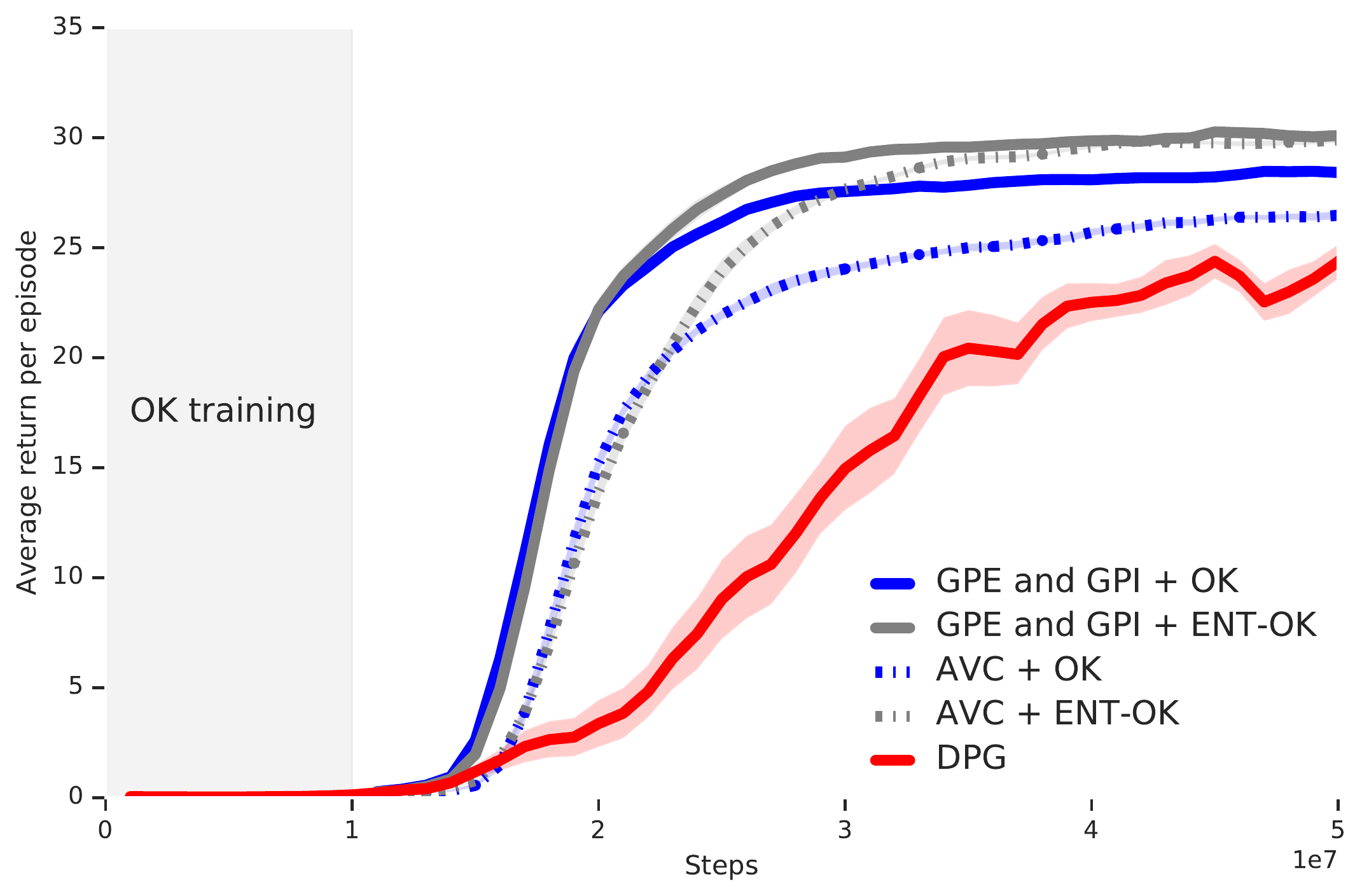}
 \raisebox{0.2\height}{\includegraphics[scale=0.17]{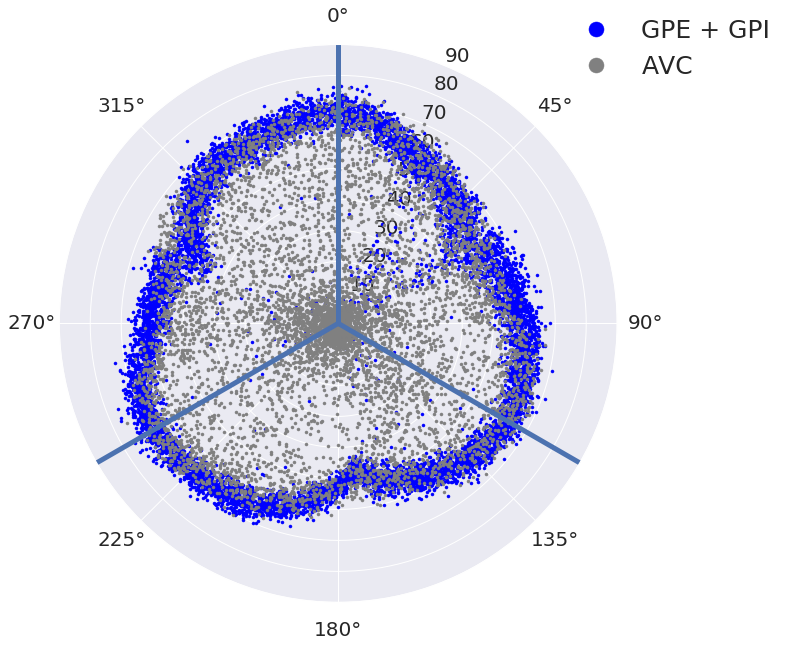}}
\vspace{-2mm}
 \caption{ \textbf{Left}: Comparison of GPE and GPI with AVC on the moving-target arena. Results were obtained by a DPG player using a standard OK and an entropy-regularised counterpart (ENT-OK). We trained several ENT-OK with different regularisation parameters and picked the one leading to the best AVC performance. The same player and keyboards were used for both methods. Shaded regions are one standard deviation over $10$ runs.  \textbf{Right}: Polar scatter chart showing the average distance travelled by the agent along directions $\w$ when combining options using the two competing methods.\label{fig:comparison_moving_target_arena}}
\vspace{-4mm}
\end{figure}

\vspace{-2mm}
\section{Related work}
\label{sec:related_work}
\vspace{-2mm}

Previous work has used GPI and successor features, the linear form of GPE considered here, in the context of transfer~\cp{barreto2017successor,barreto2018transfer,borsa2019universal}. A crucial assumption underlying these works is that the reward can be well approximated as  
$
\label{eq:linear_assum}
r(s,a,s') \approx \sum_i w_i c_i(s,a,s').
$
By solving a regression problem, the agent finds a $\w \in \R^d$ that leads to a good approximation of $r(s,a,s')$ and uses it to apply GPE and GPI (equations~(\ref{eq:gpe}) and~(\ref{eq:gpi}), respectively).  In terms of the current work, this is equivalent to having a keyboard player that is only allowed to play one endless ``chord''. Through the introduction of a termination action, in this work we replace policies with options that may eventually halt. 
Since policies are options that never terminate, the previous framework is a special case of OK. Unlike in the previous framework, with OK we can also \emph{chain} a sequence of options, resulting in more flexible behaviour. Importantly, this allows us to completely remove the linearity assumption on the rewards.

We now turn our attention to previous attempts to combine skills with no additional learning. As discussed, one way to do so is to work directly in the space of policies. Many policy-based methods first learn a parametric representation of a lower-level policy, $\pi(\cdot |\, s; \vtheta)$, and then use $\vtheta \in \R^d$ as the actions for a higher-level policy $\mu: \S \mapsto \R^d$~\cp{hees2016learning,frans2017meta2,haarnoja2018latent}. 
One of the central arguments of this paper is that combining skills in the space of cumulants may be advantageous because it corresponds to manipulating the \emph{goals} underlying the skills. This can be seen if we think of $\w \in \R^d$ as a way of encoding skills and compare its effect on behaviour with that of $\vtheta$: although the option induced by $\w_1 + \w_2$ through~(\ref{eq:gpe_ecm}) and~(\ref{eq:gpi_ecm}) will seek a combination of both its constituent's goals, the same cannot be said about a skill analogously defined as $\pi(\cdot |\, s;\, \vtheta_1 + \vtheta_2)$. More generally, though, one should expect both policy- and cumulant-based approaches to have advantages and disadvantages.

Interestingly, most of the previous attempts to combine skills in the space of value functions are based on entropy-regularised RL, like the already discussed AVC~\cp{ziebart2010modeling,fox2016taming,haarnoja2017reinforcement,haarnoja2018composable}. \ct{hunt2018entropic} propose a way of combining skills which can in principle lead to optimal performance if one knows in advance the weights of the intended combinations. They also extend GPE and GPI to entropy-regularised RL.  
\ct{todorov2007linearly} focuses on entropy-regularised RL on linearly solvable MDPs.  \ct{todorov2009compositionality} and \ct{dasilva2009linear} have shown how, in this scenario, one can compute optimal skills corresponding to linear combinations of other optimal skills---a property later explored by \ct{saxe2017hierarchy} to propose a hierarchical approach. Along similar lines, \ct{niekerk2019composing} have shown how optimal value function composition can be obtained in entropy-regularised shortest-path problems with deterministic dynamics, with the non-regularised setup as a limiting case.

 \vspace{-1mm}
\section{Conclusion}
\label{sec:conclusion}
 \vspace{-3mm}
 
The ability to combine skills makes it possible for an RL agent to learn a small set of skills and then use them to generate a potentially very large number of distinct behaviours. A robust way of combining skills is to do so in the space of cumulants, but in order to accomplish this one needs to solve two problems: (1) establish a well-defined mapping between cumulants and skills and (2) define a mechanism to implement the combined skills without having to learn them.

The two main technical contributions of this paper are solutions for these challenging problems. First, we have shown that every deterministic option can be induced by a cumulant defined in an extended domain. This novel theoretical result provides a way of thinking about options whose interest may go beyond the current work. Second, we have described how to use GPE and GPI to synthesise combined options on-the-fly, \emph{with no learning involved}. To the best of our knowledge, this is the only method to do so in general MDPs with performance guarantees for the combined options. 

We used the above formalism to introduce OK, an interface to an RL problem in which actions correspond to combined skills.  Since OK is compatible with essentially any RL method, it can be readily used to endow our agents with the ability to combine skills. In describing the analogy with a keyboard that inspired our work, \ct{sutton2016towards} calls for the need of ``something larger than actions, but more combinatorial than the conventional notion of options.'' We believe OK provides exactly that.

\newpage 

\section*{Acknowledgements}
We thank Joseph Modayil for first bringing the subgoal keyboard idea to our attention, and also for the subsequent discussions on the subject. We are also grateful to Richard Sutton, Tom Schaul, Daniel Mankowitz, Steven Hansen, and Tuomas Haarnoja for the invaluable conversations that helped us develop our ideas and improve the paper. Finally, we thank the anonymous reviewers for their comments and suggestions.

\bibliographystyle{abbrvnat}

\appendix
\newpage

\vspace{7mm} 
\begin{center}
\vspace{7mm} 
\noindent\makebox[\textwidth]{\rule{\textwidth}{2.0pt}} \\
\vspace{3mm} 
{\bf {\LARGE  The Option Keyboard \vspace{2mm} \\ Combining Skills in Reinforcement Learning} \\
\vspace{2mm} {\Large Supplementary Material} }
\vspace{5mm} 
\noindent\makebox[\textwidth]{\rule{\textwidth}{1.0pt}}

  {\bf Andr\'e Barreto}, 
  {\bf Diana Borsa}, 
  {\bf Shaobo Hou}, 
  {\bf Gheorghe Comanici}, 
  {\bf Eser Ayg\"un},    \\
  {\bf Philippe Hamel}, 
  {\bf Daniel Toyama}, 
  {\bf Jonathan Hunt}, 
  {\bf Shibl Mourad}, 
  {\bf David Silver}, 
  {\bf Doina Precup}  \vspace{1.0mm}  \\
   \texttt{\small \{andrebarreto,borsa,shaobohou,gcomanici,eser\}@google.com} \\
   \texttt{\small \{hamelphi,kenjitoyama,jjhunt,shibl,davidsilver,doinap\}@google.com}  \vspace{1.5mm} \\
  DeepMind 

\end{center}
\maketitle

\begin{abstract}
In this supplement we give details of the theory and experiments that had to 
be left out of the main paper due to space constraints. We prove our theoretical result, provide a thorough description of the protocol used to carry out our experiments, and present details of the algorithms. We also present additional empirical results and analysis, as well as a more in-depth discussion of several aspects of OK. The numbering of sections, equations, and figures resume from what is used in the main paper, so we refer to these elements as if paper and supplement were a single document.
\end{abstract}

\section{Theoretical results}
\label{sec:theory}

\setcounter{proposition}{0}
\begin{proposition}
Every extended cumulant induces at least one deterministic option, and every deterministic option can be unambiguously induced by an infinite number of extended cumulants.
\end{proposition}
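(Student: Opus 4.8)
The plan is to route both directions through a single object, the \emph{derived MDP} $M_{\ecm}$ whose state space is the history space $\H$, whose action set is $\A^{+}$, whose transitions append the observed next state (sending $(h,a) \mapsto h\,a\,s'$ with probability $p(s'\,|\,\curr(h),a)$ for $a \ne \tau$) and send $\tau$ to an absorbing terminal state, and whose reward is the extended cumulant $\ecm$. Because terminating collects only the instantaneous bonus and nothing thereafter, one has $Q^{\api_{\ecm}}_{\ecm}(h,\tau) = \ecm(h,\tau)$, consistent with~(\ref{eq:termination}). Assuming $\ecm$ is bounded, the Bellman optimality operator of $M_{\ecm}$ is a $\gamma$-contraction in sup-norm over bounded functions on $\H \times \A^{+}$, so a unique optimal action-value function $Q^{*}_{\ecm}$ exists and $\api_{\ecm}$ can be taken to be any policy greedy with respect to it.

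For the first claim I would start from $\api_{\ecm}$ and simply read off the components of $o_{\ecm}$ as in~(\ref{eq:termination}); the one thing needing verification is that the termination rule there is consistent with $\api_{\ecm}$ being optimal. Since $Q^{*}_{\ecm}(h,\tau) = \ecm(h,\tau)$, optimality forces $\api_{\ecm}(h) = \tau$ whenever $\ecm(h,\tau) > \max_{a \ne \tau} Q^{*}_{\ecm}(h,a)$ and $\api_{\ecm}(h) \in \A$ whenever the inequality is reversed; in the boundary case of a tie I would fix the tie-breaking rule to favour a concrete action, which is exactly why the statement claims only ``at least one'' option. With this convention $\beta_{\ecm}$ is $\{0,1\}$-valued by construction and coincides with the event $\{\api_{\ecm} = \tau\}$, so $o_{\ecm}$ is a well-defined deterministic option, and $\I_{\ecm} = \{s : \beta_{\ecm}(s)=0\}$ falls out of the single-state case.

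For the second claim I would exhibit an explicit family of inducing cumulants by lifting~(\ref{eq:cumulant_pi}) to histories and the augmented action space. Given a deterministic $o = (\I_o, \pi_o, \beta_o)$ (consistent in the sense $\I_o = \{s:\beta_o(s)=0\}$ built into the formalism), for a parameter $z<0$ I would set $\ecm(h,a,\cdot)=0$ when either $\beta_o(h)=0$ with $a=\pi_o(h)$, or $\beta_o(h)=1$ with $a=\tau$, and $\ecm(h,a,\cdot)=z$ otherwise. The key computation is that no reward exceeds $0$, so every optimal value is $\le 0$, while following $o$ and terminating at the prescribed histories yields a return of exactly $0$; hence $Q^{*}_{\ecm} \equiv 0$ and the greedy action is the \emph{unique} maximiser at every $h$ --- $\pi_o(h)$ at continuation histories and $\tau$ at termination histories. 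Substituting $Q^{*}_{\ecm} \equiv 0$ into~(\ref{eq:termination}) recovers $\beta_o$, the continuation branch recovers $\pi_o$ and $\I_o$, so $o_{\ecm}=o$ unambiguously; and since every $z<0$ works, this yields an infinite family.

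The main obstacle I anticipate is the self-referential flavour of~(\ref{eq:termination}) in the first direction: $\beta_{\ecm}$ is defined through $Q^{\api_{\ecm}}_{\ecm}$, which itself depends on $\api_{\ecm}$. Resolving this cleanly requires first pinning down $\api_{\ecm}$ as an optimal policy of $M_{\ecm}$, so that $Q^{\api_{\ecm}}_{\ecm} = Q^{*}_{\ecm}$, and then checking that the optimal termination decisions agree with~(\ref{eq:termination}) up to ties. Getting the boundedness and contraction argument right on the infinite space $\H$, together with the correct handling of the absorbing $\tau$-transition, is where the care is needed; the uniqueness and infinitude in the second direction are then comparatively routine.
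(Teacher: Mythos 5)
Your proposal is correct and follows essentially the same route as the paper's proof: both directions go through the derived MDP on histories with the augmented action set and an absorbing state for $\tau$, the first claim reads the option's components off the greedy policy for $Q^*$, and the second exhibits the family of cumulants that pay $0$ on option-consistent choices and $z<0$ otherwise. Your handling of the initiation set (normalising $\beta_o$ on single-state histories to encode $\I_o$) and of ties is a harmless repackaging of the paper's explicit case split between $h\in\S$ and $h\notin\S$.
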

\begin{proof}
We start by showing that every extended cumulant induces one or more deterministic options. Let $\ecm: \H \times \A^+ \times \S \mapsto \R$ be a cumulant defined in an MDP $M \defi (\S, \A, p, \cdot, \gamma)$; our strategy will be to define an extended MDP $M^+$ and a corresponding cumulant $\hat{\ecm}$ and show that maximising $\hat{\ecm}$ over $M^+$ corresponds to a deterministic option in the original MDP $M$. 

Since we need to model the termination of options, we will start by defining a fictitious absorbing state \nulls\ and let $\H^+ \defi \H \cup \{\nulls\}$. Moreover, we will use the following notation for the last state in a history: $\curr(h_{t: t+k}) = s_{t+k}$. Define $M^+  \defi (\H^+,\A^+, \hat{p},  \cdot, \gamma)$ where 
\begin{align*}
& \hat{p}(has | h,a) = p(s | \curr(h),a) \mbox{ for all } (h,a,s) \in \H \times \A \times \S, \\
& \hat{p}(\nulls | h, \tau) = 1 \mbox{ for all } h \in \H, \text{ and }\\
& \hat{p}(\nulls | \nulls, a) = 1 \mbox{ for all } a \in \A^+.
\end{align*}
We can now define the cumulant $\hat{\ecm}$ for $M^+$ as follows:
\begin{align*}
  &  \hat{\ecm}(h, a, s)  = \ecm(h, a, s) \mbox{ for all } (h,a,s) \in \H \times \A^+ \times \S, \mbox{ and} \\ 
  & \hat{\ecm}(\nulls, a, \nulls)  = 0\mbox{ for all } a \in \A^+.
\end{align*}
We know from the dynamic programming theory that maximising $\hat{\ecm}$ over $M^+$ has a unique optimal value function $Q^*_{\hat{\ecm}}$~\cp{puterman94markov}; we will use $Q^*_{\hat{\ecm}}$ to induce the three components that define an option.
First, define the option's policy $\pi_{\ecm}: \H \to \A$ with $\pi_{\ecm}(h) \defi \argmax_{a \ne \tau} Q^{*}_{\hat{\ecm}}(h,a)$ (with ties broken arbitrarily). Then, define the termination function as 
 \begin{equation*}
\beta_{\ecm}(h) \defi \left\{\begin{array}{l}
                      1 \text{ if } \tau  \in \argmax_a Q^{*}_{\hat{\ecm}}(h,a), \\
                      0 \text { otherwise. }
                     \end{array}\right.
 \end{equation*}
Finally, let $\I_{\ecm} \defi \{s \, | \, \beta_{\ecm}(s) = 0\}$ be the initiation set. It is easy to see that the option $o_{\ecm} \equiv (\I_{\ecm}, \pi_{\ecm}, \beta_{\ecm})$ is a deterministic option in the MDP $M$.

We now show that every deterministic option can be unambiguously induced by an infinite number of extended cumulants. Given a deterministic option $o$ specified in $M$ and a negative number $z <0$, our strategy will be to define an augmented cumulant $\ecm_z: \H \times \A^+ \times \S \mapsto \R$ that will induce option $o$ using the construction above (i.e. from the optimal value function $Q^{*}_{\hat{\ecm}_z}$ on $M^+$).
 
First we note a subtle point regarding the execution of options and the interaction between the initiation set and the termination function. 
Whenever an option $o$ is initiated in state $s \in \I_o$, it first executes $a=\pi_o(s)$ and only checks the termination function in the resulting state. This means that an option $o$ will always be executed for at least one time step. Similarly, an option that cannot be initiated in state $s$ does not need to terminate at  this state (that is, it can be that $s \notin \I_o$ and $\beta_o(h) < 1$, with $\curr(h)=s$). 
Given a deterministic option $o \defi (\I_o, \pi_o, \beta_o)$, let
\begin{equation}
\label{eq:api}
\ecm_z(h,a, \cdot) = \left\{\begin{array}{l}
                  0 \text{ if } a = \tau, h \in \S  \text{ and } h \notin \I_o, \\
                  0 \text{ if } a = \tau, h \notin \S  \text{ and } \beta_o(h) = 1, \\
                  0 \text{ if } a = \pi_o(h), \text{ and }\\
                  z  \text{ otherwise. }
                 \end{array}\right.
\end{equation}
We use the same MDP extension $M^+  \defi (\H^+, \A^+, \hat{p}, \cdot, \gamma)$ as described above and maximise the extended cumulant $\hat{e}_z$. It should be clear that $Q^*_{\hat{\ecm}_z}(h,a) = 0$ only when the action $a$ corresponds to either a transition or a termination dictated by option $o$, and $Q^*_{\hat{\ecm}_z}(h,a) < 0$ otherwise. As such, option $o$ is induced by the set of cumulants $\{ e_z \; | \; z < 0 \}$ of infinite size.
\end{proof}

\section{Additional pseudo-code}
\label{sec:pseudo_codes}

In this section we present one additional pseudo-code as a complement to the material in the main paper. Algorithm~\ref{alg:build_ok} shows a very simple way of building the set \Qec\ used by OK through $Q$-learning and $\epsilon$-greedy exploration. The algorithm uses fairly standard RL concepts. Perhaps the only detail worthy of attention is the strategy adopted to explore the environment, which switches between options with a given probability ($\epsilon_1$ in  Algorithm~\ref{alg:build_ok}). This is a simple, if somewhat arbitrary, strategy to collect data, which can probably be improved. It was nevertheless sufficient to generate good results in our experiments.

\begin{algorithm}
   \caption{Compute set \Qec\ with $\epsilon$-greedy $Q$-learning} 
   \label{alg:build_ok}
\begin{algorithmic}[1]
\REQUIRE 
$\left\{\begin{array}{ll}
\Ec = \{\ecm_1, \ecm_2, ..., \ecm_d\} & \text{cumulants} \\
\epsilon_1  & \text{probability of changing cumulant} \\
\epsilon_2  & \text{exploration parameter} \\
\alpha  & \text{learning rate} \\
\gamma  & \text{discount rate} \\
\end{array}
\right.$
\STATE select an initial state $s \in \S$
\STATE $k \la$ Uniform($\{1,2,..., d\}$) 
\REPEAT

\IF{Bernoulli($\epsilon_1$)=1}
\STATE $h \la s$
\STATE $k \la$ Uniform($\{1,2,..., d\}$) 
\COMMENT{pick a random $\ecm_k$ }
\ENDIF

\STATE {\bf if} Bernoulli($\epsilon_2$)=1 {\bf then} $a \la$ Uniform($\A$) 
\COMMENT{explore}
\STATE {\bf else} $a \la \argmax_b \Qt^{\api_{\ecm_k}}_{\ecm_k}(h,b)$
\COMMENT{GPI \label{it:gpi2}}
\IF{$a \ne \tau$}
\STATE execute action $a$ and observe $s'$
\STATE $h' \la u(h, a, s')$ 
\COMMENT{{\sl e.g.} $u(h,a,s') = h a s'$}

\FOR{$i \la 1, 2, ..., d$} 
\COMMENT{update $Q$-values}
\STATE $a' \la \argmax_{b} \Qt^{\api_{\ecm_i}}_{\ecm_i}(h', b)$
\COMMENT{$a' = \api_{\ecm_i}(h')$}
\FOR{$j \la 1, 2, ..., d$} 
\STATE {\small $\delta \la \ecm_j(h,a,s') + \gamma' \Qt^{\api_{\ecm_i}}_{\ecm_j}(h', a') - \Qt^{\api_{\ecm_i}}_{\ecm_j}(h,a)$}
\STATE {\small $\params_{\api_i} \la \params_{\api_i} + 
\alpha \delta \nabla_{\params_{\api_i}} \Qt^{\api_{\ecm_i}}_{\ecm_j}(h,a)$} 
\label{it:learn_q}
\ENDFOR
\ENDFOR

\STATE $s \la s'$

\ELSE 
\COMMENT{update values associated with termination}
\FOR{$i \la 1, 2, ..., d$}
\FOR{$j \la 1, 2, ..., d$} 
\STATE $\delta \la \ecm_j(h,\tau) - \Qt^{\api_{\ecm_i}}_{\ecm_j}(h,\tau)$
\STATE {\small $\params_{\api_i} \la \params_{\api_i} + 
\alpha \delta \nabla_{\params_{\api_i}} \Qt^{\api_{\ecm_i}}_{\ecm_j}(h,\tau)$} 
\label{it:learn_termination_bonus}
\ENDFOR
\ENDFOR
\ENDIF
\UNTIL{stop criterion is satisfied}
\STATE {\bf return} $\Qec \defi \{\Qt^{\api_{\ecm_i}}_{\ecm_j} \; | \; \forall (i,j) \in \{1, 2, ..., d\}^2\}$
\end{algorithmic}
\end{algorithm}

\section{Details of the experiments}
\label{sec:details_experiments}

In this section we give details of the experiments that had to be left out of the main paper due to the space limit.  

\subsection{Foraging world}
\label{sec:details_foraging_world}

\subsubsection{Environment}

We start by giving a more detailed description of the environment. The goal in the foraging world is to manage a set of $m$ ``nutrients'' $i$ by navigating in a grid world and picking up food items containing these nutrients in different proportions. At every time step $t$ the agent has a certain quantity of each nutrient available, $x_{it}$, and the desirability of nutrient $i$ is a function of $x_{it}$, $d_i(x_{it})$. For example, we can have $d_i(x_{it}) = 1$ if $x_{it}$ is within certain bounds and $d_i(x_{it}) = -1$ otherwise. The quantity $x_{it}$ decreases by a fixed amount $l_i$ at each time step, regardless of what the agent does: $x'_{it} = x_{it} - l_i$. The agent can increase $x_{it}$ by picking up one of the many food items available. Each item is of a certain type $j$, which defines how much of each nutrient it provides. We can thus represent a food type as a vector $\y_j \in \R^m$ where $y_{ji}$ indicates how much $x_{it}$ increases when the agent consumes an item of that type. If the agent picks up an item of type $j$ at time step $t$ it receives a reward of 
$
r_t = \sum_i y_{ji} d_i(x_{it}),
$
where $x_{it} = x'_{it-1} + y_{ji}$.
If the agent does not pick up any items it gets a reward of zero and $x_{it} = x'_{it-1}$.
The environment is implemented as a grid, with agent and food items occupying one cell each, and the usual four directional actions available. Observations are images representing the configuration of the grid plus a vector of nutrients $\mat{x}_t = [x_{1t}, ..., x_{mt}]$. 

The foraging world was implemented as a $12 \times 12$ grid with toroidal dynamics---that is, the grid ``wraps around'' connecting cells on opposite edges. We used $m=2$ nutrients and $3$ types of food: $\y_1 = (1,0)$, $\y_2 = (0,1)$, and $\y_3 = (1,1)$. Observations were image-like features of dimensions $12 \times 12 \times 3$, where the last dimension indicates whether there is a food item of a certain type present in a cell. The observations reflect the agent's ``egocentric'' view, {\sl i.e.}, the agent is always located at the centre of the grid and is thus not explicitly represented. At every step the amount available of each nutrient  was decreased by $l_i = 0.05$, for $i=1,2$. The desirability functions $d_i(x_i)$ used in the experiments of Section~\ref{sec:foraging} were:

\begin{center}
\begin{tabular}{ll}
\\
\multicolumn{1}{c}{{\bf Scenario 1}}  &  \multicolumn{1}{c}{{\bf Scenario 2}}  \\
$
d_1(x_1) = 
\begin{cases}
+1 & x_1\leq 10\\
-1 & x_1> 10
\end{cases}
$
&
$d_1(x_1) = 
\begin{cases}
+1 & x_1\leq 10\\
-1 & x_1< 10
\end{cases}
$
\\

$
d_2(x_2) = 
\begin{cases}
-1 & x_2\leq 5\\
+5 & 5<x_2<25\\
-1 & x_2\geq 25
\end{cases}
$

&

$d_2(x_2) = 
\begin{cases}
-1 & x_2\leq 5\\
+5 & 5<x_2<15\\
-1 & x_2\geq 15
\end{cases}
$ 
\end{tabular}
\end{center}

\subsubsection{Agent}

\paragraph{Agents' architecture:}
All the agents used a multilayer perceptron (MLP) with the same architecture to compute the value functions. The network had two hidden layers of $64$ and $128$ units with RELU activations. $Q$-learning's network had $|\A|=4$ output units corresponding to $\Qt(s,a)$. The network of the $Q$-learning player had $|\W|$ output units corresponding to $\Qt(s, w)$, while OK's network had $2 \times 2 \times |\A^{+}|$ outputs corresponding to $\Qt^{\api_{\ecm_i}}_{\ecm_j}(h,a) \in \Qec$.

The states $s$ used by $Q$-learning and the $Q$-learning player were $12 \times 12 \times 3$ images plus a two-dimensional vector \vx\ corresponding to the agent's nutrients. The histories $h$ used by OK were $s$ plus an indicator function signalling whether the agent has picked up a food item---that is, the update function $u(h,a,s')$ showing up in Algorithms~\ref{alg:ok} and~\ref{alg:build_ok} was defined as $u(h,a,s') = [\ind\{\text{agent has picked up a food item}\}, s']$.

\paragraph{Agents' training:} 
As described in Section~\ref{sec:foraging}, in order to build OK we defined one cumulant $\ecm_i \in \Ec$ associated with each nutrient. We now explain in more detail how cumulants were defined. If the agent picks up a food item of type $j$ at time step $t$, $\ecm_{i}(h_t, a_t, \cdot) = y_{ji}$. After a food item is picked up we have that $\ecm_i(h,a,s) = -\ind\{a \ne \tau\}$ for all $h$, $a$, and $s$---that is, the agent gets penalised unless it terminates the option. In all other situations $\ecm_{i} = 0$. 

OK was built using Algorithm~\ref{alg:build_ok} with the cumulants $\ecm_i \in \Ec$, exploration parameters $\epsilon_1=0.2$ and $\epsilon_2=0.1$, and discount rate $\gamma = 0.99$. The agent interacted with the environment in episodes of length $100$. We tried the learning rates $\alpha \in \mathcal{L}_1 \defi \{10^{-1}, 10^{-2}, 10^{-3}, 10^{-4}\}$ and selected the OK that resulted in the best performance using $\w = (1,1)$ on a scenario with $d_1(x) = d_2(x) = 1$ for all $x$. OK was trained for $5 \times 10^6$ steps, but visual inspection suggests that less than $10\%$ of the training time would lead to the same results.\footnote{Since the point of the experiment was not to make a case in favour of temporal abstraction, we did not deliberately try to minimise the total number of sample transitions used to train the options.} 

The $Q$-learning player was trained using Algorithm~\ref{alg:ok_player} with the abstract action set $\W \defi \{-1,0,1\}^2 -\{ [0,0]\}$ described in the paper, $\epsilon = 0.1$ and $\gamma = 0.99$.  All the agents interacted with the environment in episodes of length $300$. For all algorithms (flat $Q$-learning, $Q$-learning + options, and $Q$-learning player) we tried learning rates $\alpha$ in the set $\mathcal{L}_1$ above and picked the configuration that led to the maximum return averaged over the last $100$ episodes and $10$ runs.

\subsection{Moving-target arena}
\label{sec:details_moving_target_arena}

\subsubsection{Environment}

The environment was implemented using the MuJoCo physics engine~\cp{todorov2012mujoco} (see Figure~\ref{fig:moving_target_depiction}). The arena was defined as a bounded region $[-10,10]^2$ and the targets were circles of radius $0.8$. We used a control time step of $0.2$. The reward is always $0$ except when the agent reaches the target, when it gets a reward of $1$. In this case both the agent and the target reappear in random locations in $[-5,5]^2$.

\begin{figure}
\centering
\subfigure[Arena  \label{fig:ok_ant}]{
\includegraphics[height=50mm,width=50mm]{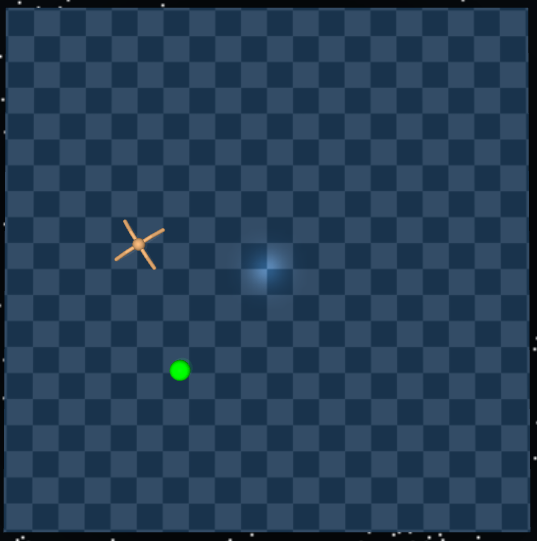}
}
\subfigure[Quadrupedal simulated robot  \label{fig:ok_ant_closeup}]{
\includegraphics[height=50mm,width=50mm]{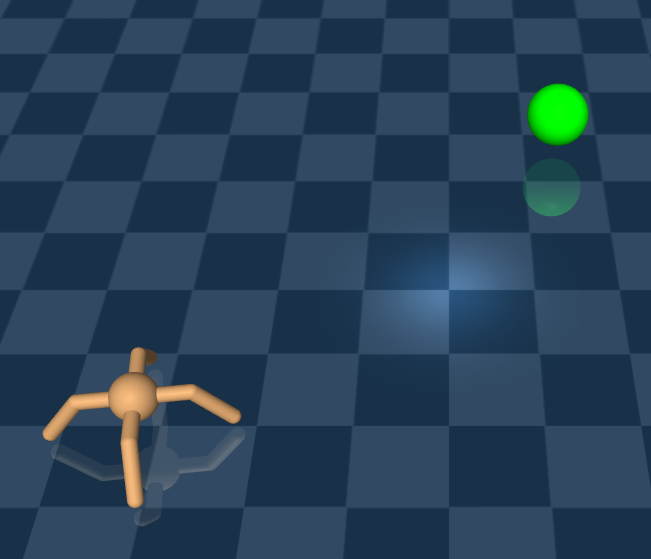}
}
\vspace{-3mm}
\caption{The moving-target arena \label{fig:moving_target_depiction}}
\end{figure}

\subsubsection{Agent}
\label{sec:details_moving_target_arena_agent}

\paragraph{Agents' architecture:}
The network architecture used for the agents was identical to that used in the experiments with the foraging world (Section~\ref{sec:details_foraging_world}). Observations are $29$-dimensional with the agent's current $(x,y)$ position and velocity, its orientation matrix ($3 \times 3$), a $2$-dimensional vector of distances from the agent to the current target, and two $8$-dimensional vectors with angles and velocities of each joint. The histories $h$ used by OK were simply the length of the trajectory plus the current state, that is, the update function $u(h,a,s')$ showing up in Algorithms~\ref{alg:ok} and~\ref{alg:build_ok} was defined in order to compute
$
h_{t: t+k} = [k, s_{t+k}].
$
As mentioned in the main paper, $\A \defi [-1,1]^8$.

\paragraph{Agents' training:}

The set of value functions \Qec\ used by OK was built  using Algorithm~\ref{alg:build_ok} with $Q$-learning replaced by deterministic policy gradient (DPG)~\cp{silver2014deterministic}. Specifically, for each cumulant $\ecm_i \in \Ec$ we ran standard DPG and used the same data to evaluate the resulting policies on-line over the set of cumulants \Ec. The cumulants $\ecm_i \in \Ec$ used were the ones described in Section~\ref{sec:moving_target_arena}, equation~(\ref{eq:ecm_k_steps_policy}).  During training exploration was achieved by adding zero-mean Gaussian noise with standard deviation $0.1$ to DPG's policy. We used batches of $10$ transitions per update, no experience replay, and a target network. The discount rate used was $\gamma = 0.9$. The agent interacted with the environment in episodes of length $300$. We swept over learning rates $\alpha \in \mathcal{L}_2 \defi \{10^{-2}, 10^{-3}, 3 \times 10^{-4}, 10^{-4}, 10^{-5}\}$, and selected the OK that resulted in the best performance in a small set of evaluation vectors $\w$ with $w_1, w_2 > 0$ (that is, the directions used for evaluation did not correspond to those of the basic options $\api_{\ecm_i}$). OK was trained for $10^7$ steps.

The $Q$-learning players were trained using Algorithm~\ref{alg:ok_player} with the discrete abstraction action set \W\ described in the paper, $\epsilon = 0.1$, and $\gamma = 0.99$. Updates were applied to batches of $10$ sample transitions. The DPG player was trained using the same implementation as the DPG used to build OK, and the same value $\gamma=0.99$ used by the $Q$-learning player. Note that, given a fixed $\w \in \R^2$, in order to compute the $\max$ operator appearing in~(\ref{eq:gpi_ecm}) we need to sample actions $\mat{a} \in \R^8$. We did so using a simple cross-entropy Monte-Carlo method with $50$ samples~\cp{deboer2005tutorial}. The same DPG implementation was also used by the flat DPG as a baseline, the only difference being that it used the actions space $\A \subset \R^8$ instead of the abstract action space $ \W \subset \R^2$.  All the agents interacted with the environment in episodes of length $1200$. For all algorithms ($Q$-learning, $Q$-learning player, DPG,  and DPG player) we tried learning rates $\alpha$ in the set $\mathcal{L}_2$ above and picked the configuration that led to the maximum average return, averaged over $10$ runs.

The results comparing GPE and GPI with AVC shown in Figure~\ref{fig:comparison_moving_target_arena} were generated exactly as explained above. In order to train the entropy-regularised OKs we used the soft actor-critic algorithm proposed by~\ct{haarnoja2018soft}. We trained one OK for each regularisation parameter in $\{0.001,0.01,0.03,0.1,0.3,1.0\}$ and selected the one leading to the best performance of AVC. In addition to the implementation of the AVC option $\hat{\api}_\ecm(h)$ described in Section~\ref{sec:moving_target_arena}, we also tried a ``soft'' version in which $\hat{\api}_\ecm$ is a stochastic policy defined as $\hat{\api}_\ecm(a|h) \propto \sum_j w_j \hat{Q}^{\api_{\ecm_j}}_{\ecm_j} (h,a)$, where, as before,  $\hat{Q}^{\api_{\ecm_j}}_{\ecm_j} (h,a)$ are entropy-regularised value functions and $w_j \in [-1,1]$. The results with the stochastic policy were slightly worse than the ones shown in Figure~\ref{fig:comparison_moving_target_arena} (this is consistent with some experiments reported by~\ct{haarnoja2018soft}).

\subsubsection{Experiments}

In order to generate the histogram shown in Figure~\ref{fig:results_moving_target_arena} we sampled $100\,000$ directions $\w \in \R^2$ from a player with a uniformly random policy and inspected the value of the action $a \in \R^8$ returned by GPI~(\ref{eq:gpi_ecm}). Specifically, we considered the selected action came from one of the three basic options $\api_{\ecm_i}$ if
\begin{equation}
\label{eq:comp_actions}
\min_{i \in \{1,2,3\}, \mat{a} \in \tilde{\A}} \left|Q^{\api_{\ecm_i}}_{\ecm} (h,\api_{\ecm_i}(h) - Q^{\api_{\ecm_i}}_{\ecm} (h,\mat{a}))\right| \le 0.15, 
\end{equation}
where $\ecm = \sum_i w_i \ecm_i$ and $\tilde{\A}$ is the set of actions sampled  through the cross-entropy sampling process described in the previous section. If~(\ref{eq:comp_actions}) was true we considered the action selected came from the option $\api_{\ecm_i}$ associated with the index $i$ that minimises the left-hand side of ~(\ref{eq:comp_actions}); otherwise we considered the action came from a combined option.

In order to generate the polar scatter chart shown in Figure~\ref{fig:comparison_moving_target_arena} we sampled $10\,000$ pairs $(s,\w)$, with $s$ sampled uniformly at random from \S\ and abstract actions \w\ sampled from an isotropic Gaussian distribution in $\R^d$ with unit variance (where $d=3$ for AVC and $d=2$ for GPE and GPI).\footnote{As explained in Section~\ref{sec:moving_target_arena}, in our implementation of GPE and GPI we explored the fact that $Q^{\api}_{\ecm_{\w}}(h,a) = w_1 Q^{\api}_{\ecm_{\vx}}(h,a) + w_2 Q^{\api}_{\ecm_{\vy}}(h,a)$ for any option \api\ and any direction $\w \in \R^2$, where $\vx = [1,0]$ and $\vy = [0,1]$, to only compute two value functions per cumulant $\ecm$. This results in a two-dimensional abstract space $\w \in \R^2$. Since GPE is not part of AVC (that is, the option induced by a cumulant is not evaluated under other cumulants), it is not clear how to carry out a similar decomposition in this case. } Then, for each pair $(s,\w)$, we ran the option resulting from~(\ref{eq:gpe_ecm}) and~(\ref{eq:gpi_ecm}) for $60$ simulated seconds, without termination, and measured the distance travelled along the desired direction $\w$ (for $\w \in \R^3$ we first projected the weights onto $\R^2$ using the decomposition discussed in Section~\ref{sec:moving_target_arena}). Each point in the scatter chart defines a vector whose direction is the intended \w\ and whose magnitude is the travelled distance along that direction.

\section{Discussion}
\label{sec:discussion}

In this section we take a closer look at some aspects of OK. We start with a thorough discussion on how extended cumulants can be used to define deterministic options; we then analyse several properties of GPE and GPI in more detail.

\subsection{Defining options through extended cumulants}
\label{sec:defining_options}

We have shown that every deterministic option $o$ can be represented by an augmented policy $\api_{\ecm}: \H \mapsto \A^+$, which in turn can be induced by an extended cumulant $\ecm: \H \times \A^+ \times \S \mapsto \R$ (in fact, by an infinite number of them). In order to provide some intuition on these relations, in this section we give a few concrete examples of how to generate potentially useful options using extended cumulants.

We start by defining an option that executes a policy $\pi: \S \mapsto \A$ for $k$ time steps and then terminates. This can be accomplished using the following cumulant:
\begin{equation}
\label{eq:ecm_k_steps_basic_policy}
\ecm(h, a, \cdot) = \left\{\begin{array}{l}
                      0 \text{ if } \length(h) \le k \text{ and } a = \pi(\curr(h)); \\
                      0 \text{ if } \length(h) = k + 1 \text{ and } a = \tau; \\
                      -1 \text { otherwise, }
                     \end{array}\right.
\end{equation}
where $\length(h)$ is the length of history $h$, that is, $\length(h_{t,t+k}) = k+1$, and $\curr(h_{t: t+k}) = s_{t+k}$  (also see~(\ref{eq:ecm_k_steps_policy})). Note that if $\pi(s) = a$ for all $s \in \S$ action $a$ is repeated $k$ times in sequence; for the particular case where $k=1$ we recover the primitive action $a$. Another instructive example is an option that navigates to a goal state $g \in \S$ and terminates once this state has been reached. We can get such behaviour using the following extended cumulant:
\begin{equation}
\label{eq:ecm_goal}
\ecm(h, a, \cdot) = \left\{\begin{array}{l}
                      1 \text{ if } \curr(h) = g \text{ and } a = \tau; \\
                      0 \text { otherwise. }
                     \end{array}\right.
\end{equation}
Note that the cumulant is non-zero only when the agent chooses to terminate in $g$. 
Yet another possibility is to define a fixed termination bonus $\ecm(h,\tau) = z$ for all $h \in \H$, where $z \in \R$; in this case the option will terminate whenever it is no longer possible to get more than $z$ discounted units of \ecm. 

Even though working in the space of histories \H\ is convenient at the conceptual level, in practice the extended cumulants only have to be defined in a small subset of this space, which makes them easy to be implemented. In order to implement~(\ref{eq:ecm_k_steps_basic_policy}), for example, one only needs to keep track of the number of steps executed by the option and the last state and action experienced by the agent ({\sl cf.} Section~\ref{sec:details_moving_target_arena_agent}). The implementation of~(\ref{eq:ecm_goal}) is even simpler, requiring only the current state and action. Obviously, one is not restricted to cumulants of these forms; other versions of $\ecm$ can define interesting trade-offs between terminating and continuing. 

As a final observation, note that, unlike with standard termination functions $\beta_o(h)$, (\ref{eq:termination}) depends on the value function $Q^{\api_{\ecm}}_{\ecm}$. This means that, when $Q^{\api_{\ecm}}_{\ecm}$ is being \emph{learned}, the termination condition may change during learning.  This can be seen as a natural way of incorporating $\beta_o(h)$ into the learning process, and thus impose a form of consistency on the agent's behaviour. When we define~(\ref{eq:termination}), we are asking the agent to terminate in $h$ if it cannot get more than $\ecm(h,\tau)$ (discounted) units of $\ecm$; thus, even if it \emph{is} possible to do so, a sub-optimal agent that is not capable of achieving this should perhaps indeed terminate. 

\subsection{GPE and GPI}

{\bf The nature of GPE and GPI's options}: Given a set of cumulants \Ec, GPE and GPI can be used to compute an approximation of any option induced by a linear combination of the elements of this set. Although this potentially gives rise to a very rich set of behaviours, not all useful combinations of skills can be represented in this way. To illustrate this point, suppose that all cumulants $\ecm \in \Ec$ take values in $\{0,1\}$. In this case, when the weights $\w$ are nonnegative, it is instructive to think of GPE and GPI as implementing something in between the {\tt AND} and the {\tt OR} logical operators, as positive cumulants are rewarding in isolation but more so in combination. GPE and GPI cannot implement a strict {\tt AND}, for example, since this would require only rewarding the agent when all cumulants are equal to $1$. \ct{niekerk2019composing} present a related discussion in the context of entropy-regularised~RL.   

{\bf The mechanics of GPE and GPI}: There are two ways in which OK's combined options can provide benefits with respect to an agent that only uses single options. As discussed in Section~\ref{sec:combining_options}, a combined option constructed through GPE and GPI \emph{can be different from all its constituent options}, meaning that the actions selected by the former may not coincide with any of the actions taken by the latter (including  termination). But, even when the combined option could in principle be recovered as a sequence of its constituents, having it can be very advantageous for the agent. To see why this is so, it is instructive to think of GPE and GPI in this case as a way of automatically carrying out an alternation of the single options that would otherwise have to be deliberately implemented by the agent. This means that, in order to emulate combined options that are a sequence of single options, a termination should occur at every point where the option achieving the maximum in~(\ref{eq:gpi_ecm}) changes, resulting in potentially many more decisions to be made by the agent.

{\bf Option discovery}: As discussed in Section~\ref{sec:combining}, the precise interface to an RL problem provided by OK is defined by a set of extended cumulants \Ec\ plus a set of abstract actions \W. A natural question is then how to define \Ec\ and \W. Although we do not have a definite answer to this question, we argue that these definitions should aim at exploiting a specific structure in the RL problem. Many RL problems allow for a hierarchical decomposition in which decisions are made at different levels of temporal abstraction. For example, as illustrated in Section~\ref{sec:moving_target_arena}, in a navigation task it can be beneficial to separate decisions at the level of intended locomotion ({\sl e.g.}, ``go northeast'') from their actual implementation ({\sl e.g.}, ``apply a certain force to a specific joint''). Most hierarchical RL algorithms exploit this sort of structure in the problem; another type of structure that has received less attention occurs when each hierarchical level can be further decomposed into distinct skills that can then be combined (for example, the action ``go northeast'' can be decomposed into ``go north'' and ``go east''). In this context, the cumulants in \Ec\ should describe the basic skills to be combined and the set \W\ should identify the combinations of these skills that are useful. Thus, the definition of \Ec\ and \W\ decomposes the problem of option discovery into two well-defined objectives, which can potentially make it more approachable.

{\bf The effects of approximation}: Once \Ec\ and \W\ have been defined one has an interface to a RL problem composed of a set of deterministic options $\tilde{\api}_{\ecm}$. Each $\tilde{\api}_{\ecm}$ is an \emph{approximation} of the option $\api_{\ecm}$ induced by the cumulant $e = \sum_i w_i \ecm_i$. \ct{barreto2017successor} have shown that it is possible to bound $Q^{\api_{\ecm}}_{\ecm} - Q^{\tilde{\api}_{\ecm}}_{\ecm}$ based on the quality of the approximations $\Qt^{\api_{\ecm_i}}_{\ecm_j}$ and the minimum distance between $\ecm$ and the cumulants $ \ecm_i \in \Ec$. Although this is a reassuring result, in the scenario studied here the sub-optimality of the options $\tilde{\api}_{\ecm}$ is less of a concern because it can potentially be circumvented by the operation of the player. To see why this is so, note that a navigation option that slightly deviates from the intended direction can be corrected by the other directional options (especially if it is prematurely interrupted, which, as discussed in Section~\ref{sec:defining_options}, can be a positive side effect of using~(\ref{eq:termination})). Although it is probably desirable to have good approximations of the options intended in the design of \Ec, the player should be able to do well as long as the set of available options is expressive enough. This suggests that the potential to induce a \emph{diverse} set of options may be an important criterion in the definition of the cumulants \Ec, something previously advocated in the literature.

\section{Additional results and analysis}

We now present some additional empirical results that had to be left out of the main paper due to the space limit.

\subsection{Foraging World}

In this section we will take a closer look at the results presented in the main paper and study step-by-step the behaviour induced by different desirability profiles of the nutrients. For each of these regimes of desirabily, we will also inquire what the combined options will do and which of them one would expect to be useful. In order to do this, we are going to consider various instantiations of the absract action set $\W$. As a reminder, the $Q$-learning player presented in the main paper was trained using Algorithm~\ref{alg:ok_player} with the abstract action set $\W \defi \{-1,0,1\}^2 -\{ [0,0]\}$. In the following section, we will refer back to this agent as 'Q-learning player (8)', indicating the cardinality of the set of absract action considered -- in this case, $|\W| = 8$. In addition, we will consider in our investigations $\W_0 \defi \{(1,0), (0,1)\}$, the set of basic options, and the following individual combinations: $\w_1 = (1,1)$, $\w_2 = (1,-1)$, $\w_3 = (-1,1)$, and $\w_4 = (-1,-1)$. As in the main paper, we refer to the instantiation of the $Q$-learning player that uses $\W_0$ as \textit{Q-learning player + options} (QO). Otherwise, we will use QP($n$) to refer to a Q-learning player with $n$ (combined) options. Specifically, we adopt QP($3$)-$i$ to refer to the players using $\W_i \defi \W_0 \cup \{\w_i\}$. Finally, throughout this study, we include a version of $Q$-learning (QL) that uses the original action space to serve as our flat agent baseline. This agent does not use any form of abstraction and does not have access to the trained options.

Most of the settings of the environment stay the same: we are going to be considering two types of nutrients and three food items $\{\y_1 = (1,0), \y_2 = (0,1), \y_3 = (1,1)\}$ available for pick up. The only thing we are going to be varying is the desirability function associated with each of these nutrients. We will see that this alone already gives rise to very interesting and qualitatively different learning dynamics. 
In particular, we are going to be looking at four scenarios, slightly simpler than the ones used in the paper, based on the same (pre-trained) keyboard $\Qec$. These scenarios should help the reader to build some intuition for what a player based on this keyboard could achieve by combining options under multiple changes in the desirability functions---as exemplified by the scenarios 1 and 2 in the main paper, Figure \ref{fig:results_foraging_world}. 

\begin{figure*}
\centering
\newcommand{\scl}{0.365}
\subfigure[Scenario $A1$  \label{fig:add_results_foraging_world0}]{
\includegraphics[scale=\scl]{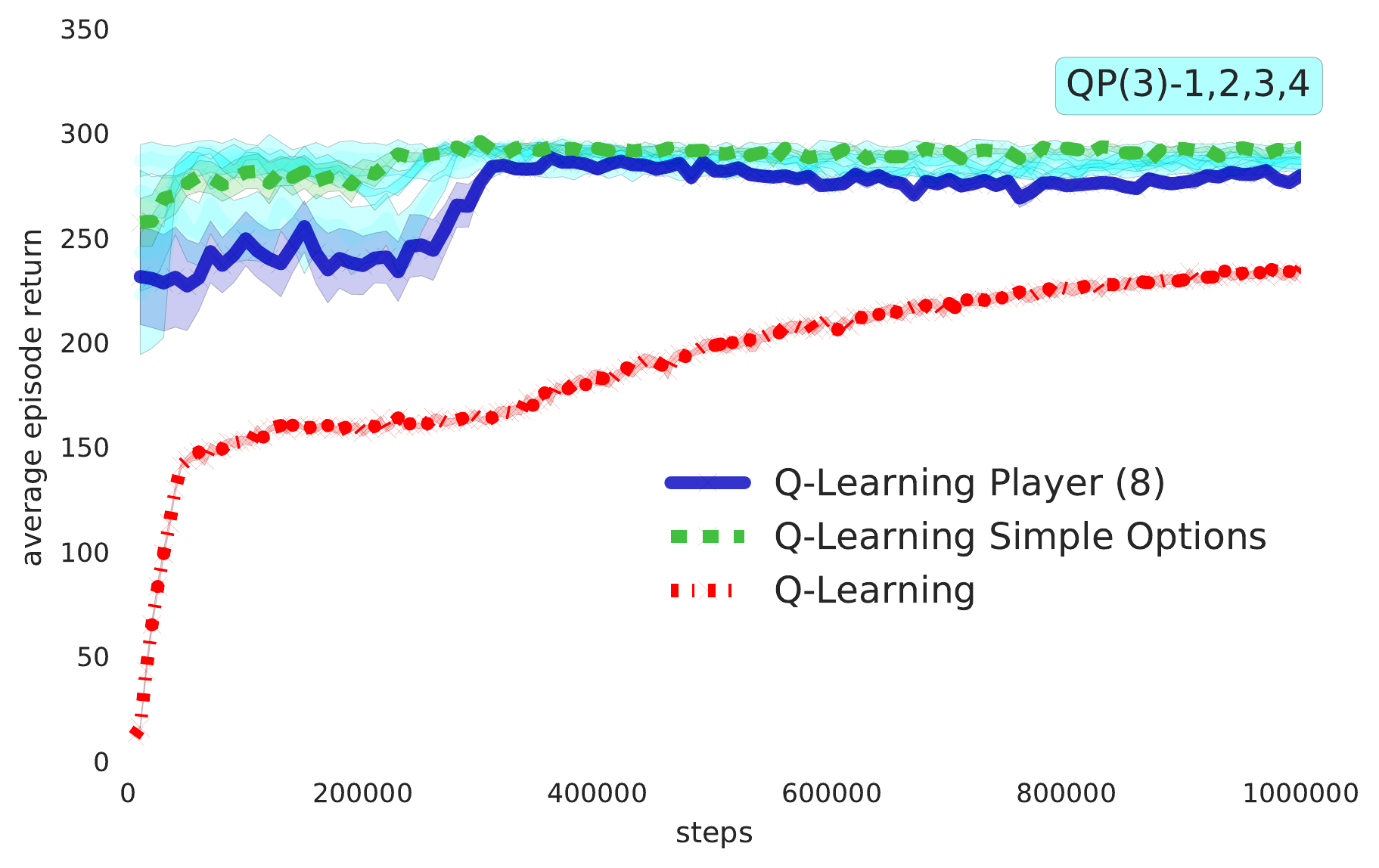}
}\hfill
\subfigure[Scenario $A2$  \label{fig:add_results_foraging_world1}]{
\includegraphics[scale=\scl]{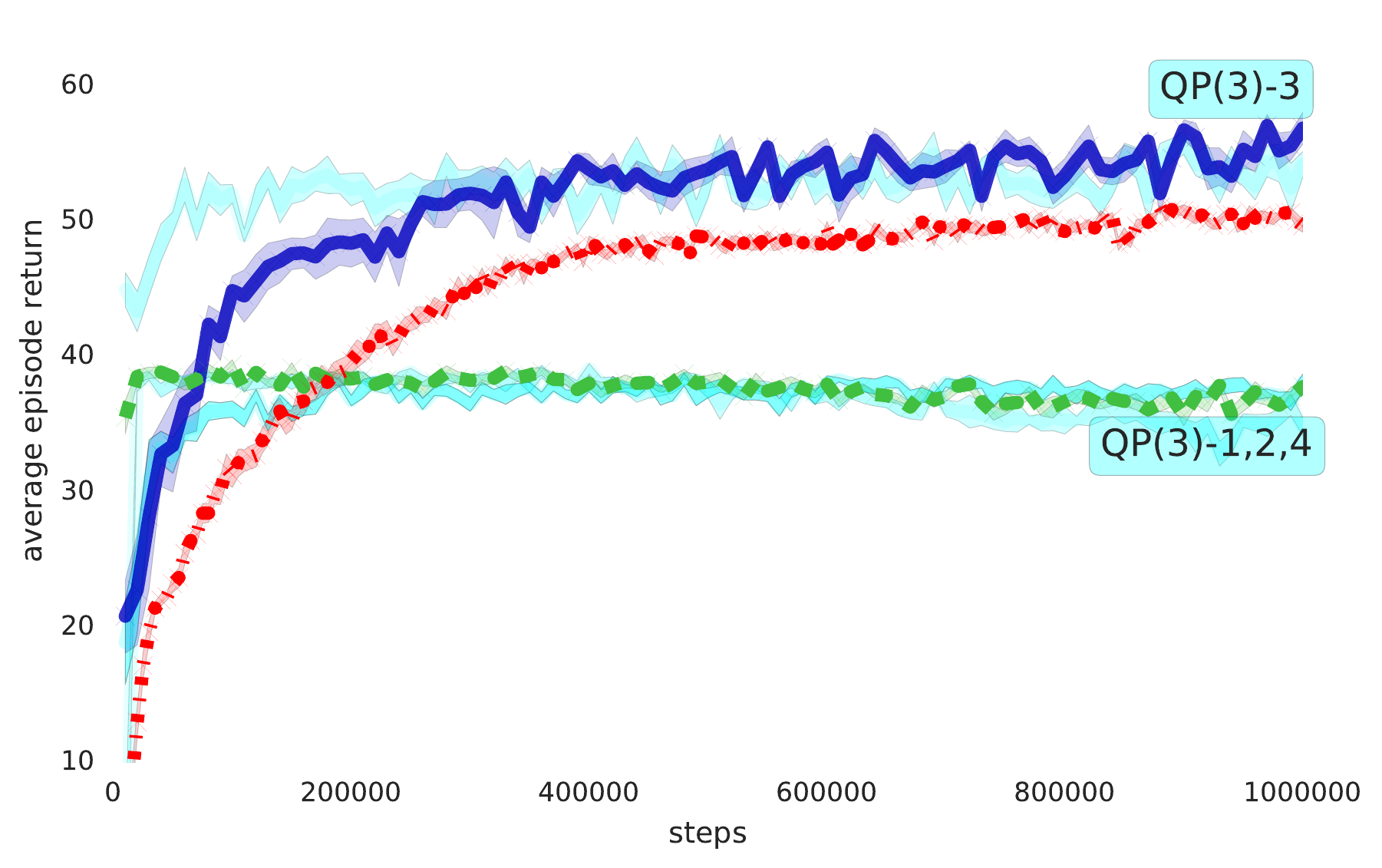}
}

\subfigure[Scenario $A1$: $d_i(x_i)$  \label{fig:add_foraging_world0}]{
\includegraphics[width=0.44\textwidth]{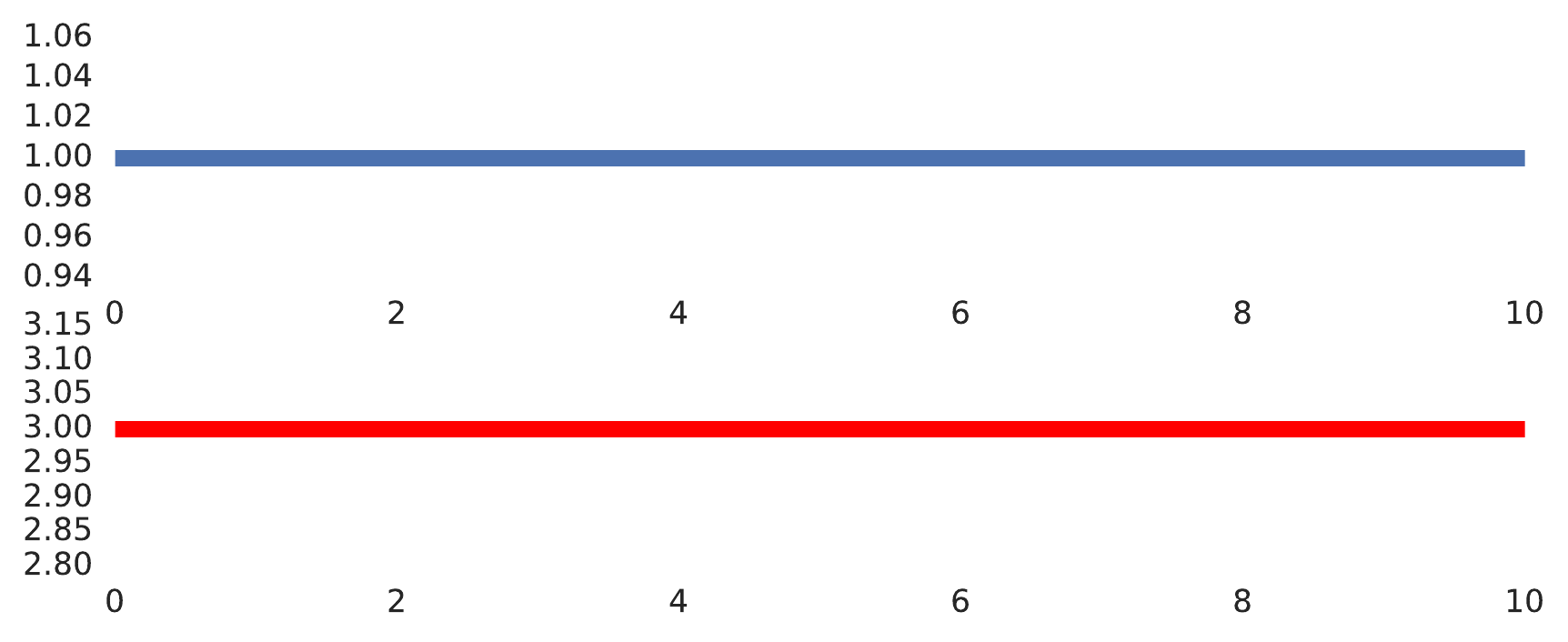}
} 
\subfigure[Scenario $A2$: $d_i(x_i)$  \label{fig:add_foraging_world1}]{
\includegraphics[width=0.44\textwidth]{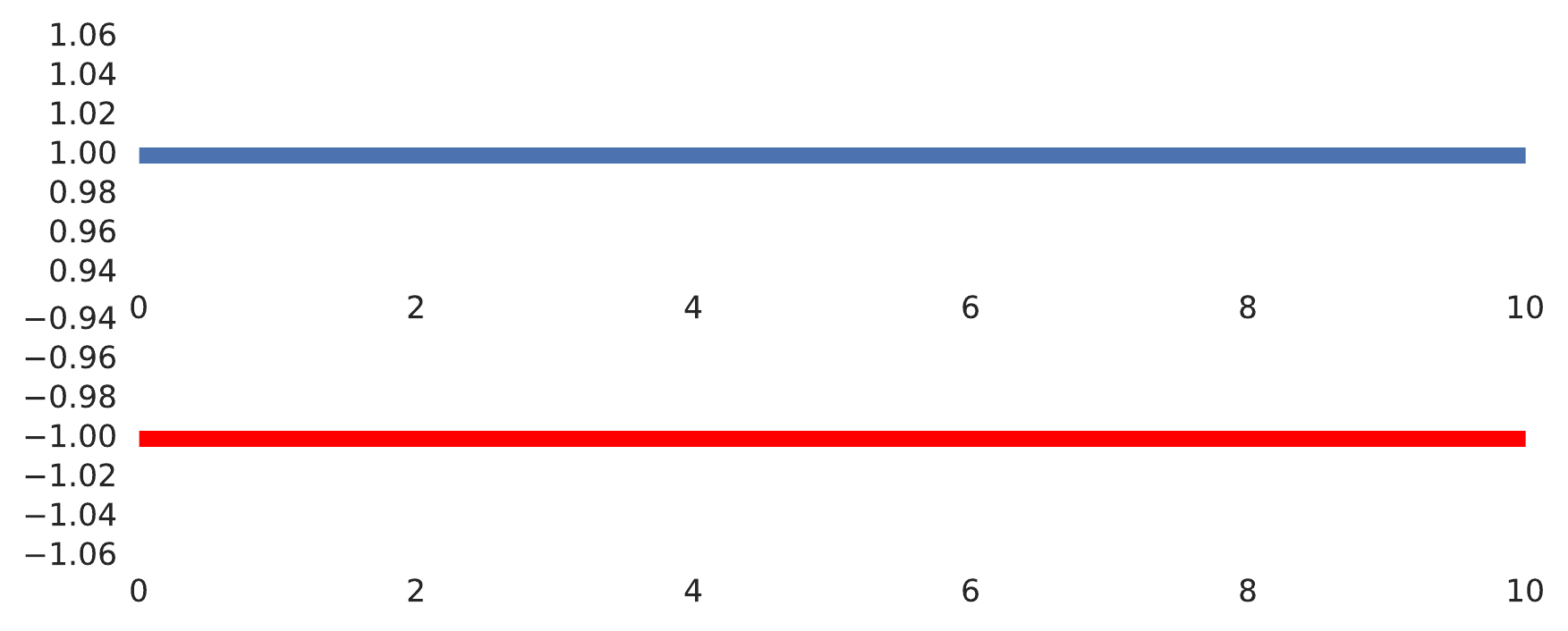}
}
\caption{Results on the foraging world using $m=2$ nutrients and $3$ types of food items: $\y_1 = (1,0)$, $\y_2 = (0,1)$, and $\y_3 = (1,1)$. Shaded regions are one standard deviation over $10$ runs. \label{fig:add_results_foraging_world12}}
\end{figure*}

The simplest scenario one could consider is one where the desirability functions associated with each nutrients are constant. In particular we will look at the scenario where both desirability functions are positive (Figure \ref{fig:add_foraging_world0}). In this case we benefit from picking up any of the two nutrients and the most desirable item is item $3$ which gives the agent a unit of each nutrient. As our keyboard was trained for cumulants corresponding to $\mathcal{W}_0 = \{(1,0),(0,1)\}$, the trained primitive options would be particularly suited for this task, as they are tuned to picking up the food items present in the environment. Performance of the player and comparison with a Q-learning agent are reported in Figure \ref{fig:add_results_foraging_world0}. The first thing to note is that our player can make effective use of the OK's options and converges very fast to a very good performance. Nevertheless, the Q-learning agent will eventually catch-up and could possibly surpass our players' performance, as our policy for the ``true'' cumulant induced by $w^*=(1,1)$ is possibly suboptimal. But this will require a lot more learning.

The second simple scenario we looked at is the one where one nutrient is desirable---$d_1(x)>0, \forall x$---and the other one is not: $d_2(x)<0, \forall x$ (Figure \ref{fig:add_foraging_world1}). In this case only one of the trained options will be useful, the one going for the nutrient that has a positive weight. But even this is one will be suboptimal as it will pick up equally food item $1$ ($\y_1=(1,0)$) and $3$ ($\y_3=(1,1)$) although the latter will produce no reward for the player. Moreover, sticking to this first option, which is the only sensible policy available to QO, the player cannot avoid items of type $2$ if they happen to be on their path to collecting one of the desirable items ($1$ or $3$). This accounts for the suboptimal level that this player, based solely on the trained options, achieves in Figure \ref{fig:add_results_foraging_world1}. 
We can also see that the only combined option that improved the performance of the player is $\w_3 =(1,-1)$ which corresponds exactly to the underlying reward scheme present in the environment at this time (this is akin to the scenario discussed in Section \ref{sec:combining_options}, $e_1-e_2$, where the agent is encouraged to walk ($e_1$) while avoiding grasping objects ($e_2$)). By adding this \textit{synthesised option} to the collection of primitive options, we can see that the player Q(3)-3 achieves a considerably better asymptotic performance and maintains a speedy convergence compared to our baseline (QL). It is also worth noting that, in the absence of any information about the dynamics of the domain, we can opt for a range of diverse combinations, like exemplified by QP(8), and let the player decide which of them is useful. This will mean learning with a larger set of options, which will delay convergence. At the same time this version of the algorithm manages to deal with both of these situations, and many more, as shown in Figures \ref{fig:add_results_foraging_world34} and \ref{fig:paper_results_foraging_world}, without any modification, being agnostic to the type of change in reward the player will encounter. We hypothesise that this is representative of the most common scenario in practice, and this is why in the main paper we focus our analysis on this scenario alone. Nevertheless, in this in-depth analysis, we aim to understand what different combination of the same set of options would produce and in which scenarios a player would be able to take advantage of these induced behaviours.

\begin{figure*}
\centering
\newcommand{\scl}{0.362}
\subfigure[Scenario $A3$  \label{fig:add_results_foraging_world3}]{
\includegraphics[scale=\scl]{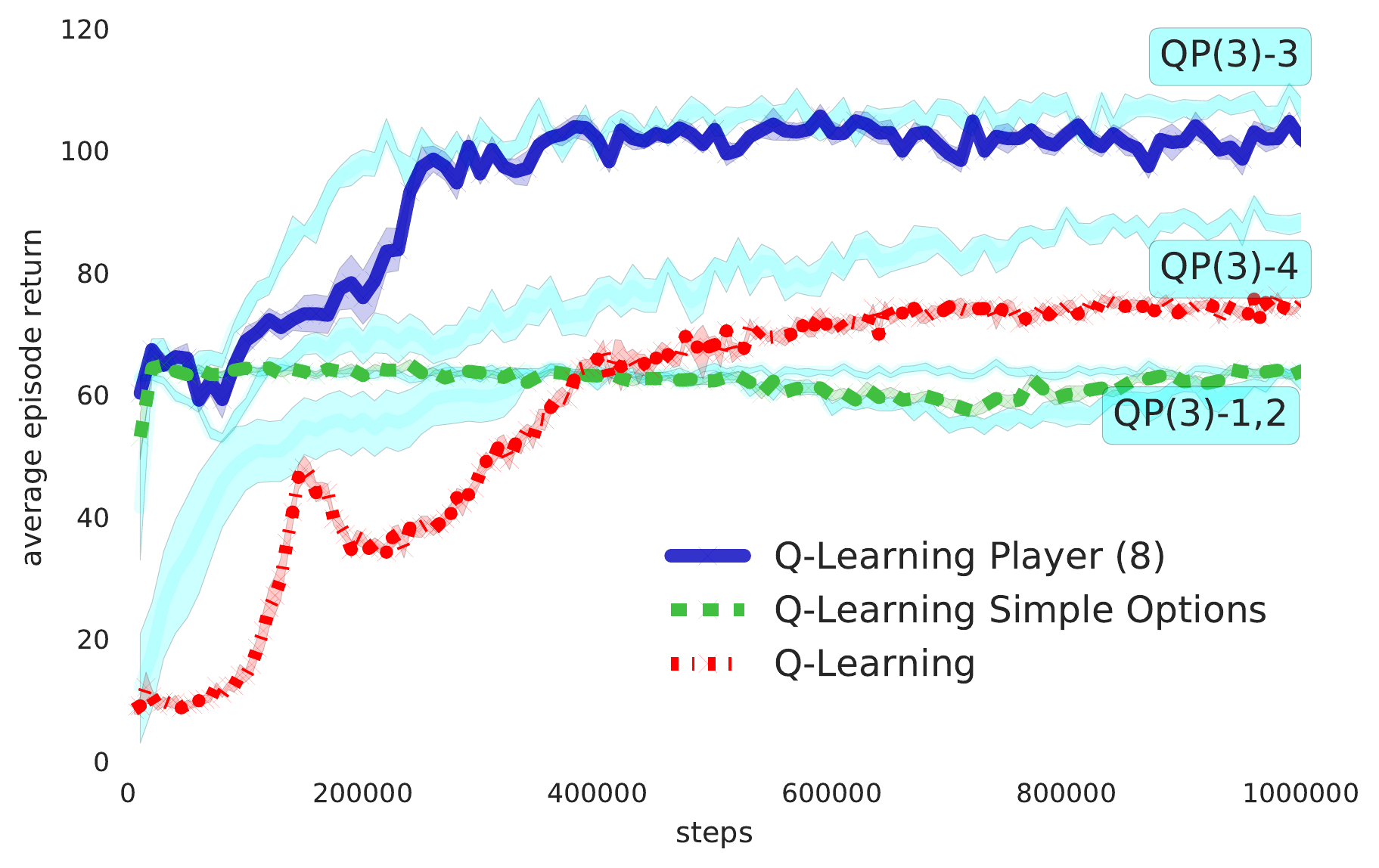}
} \hfill
\subfigure[Scenario $A4$  \label{fig:add_results_foraging_world4}]{
\includegraphics[scale=\scl]{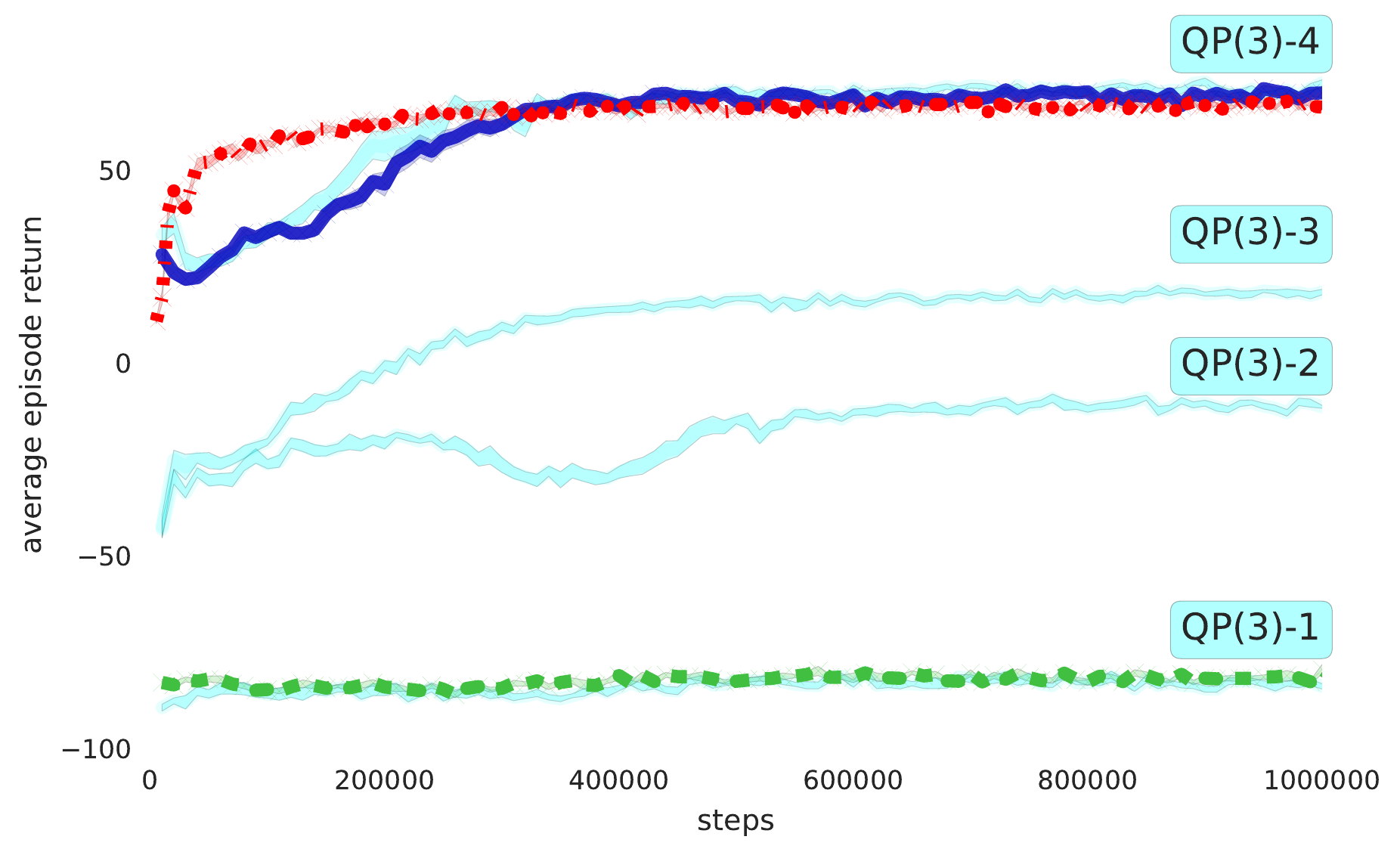}
}

\subfigure[Scenario $A3$: $d_i(x_i)$  \label{fig:add_foraging_world3}]{
\includegraphics[width=0.44\textwidth]{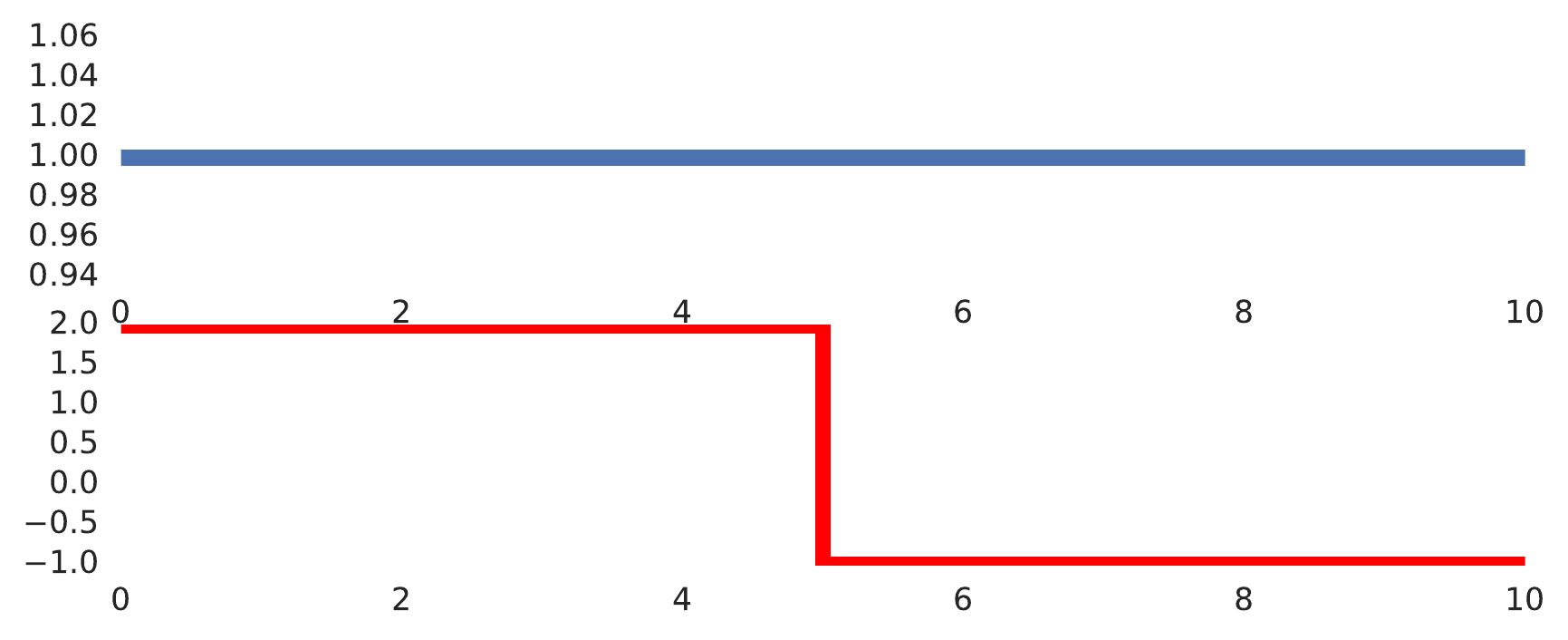}
} \hfill
\subfigure[Scenario $A4$: $d_i(x_i)$  \label{fig:add_foraging_world4}]{
\includegraphics[width=0.44\textwidth]{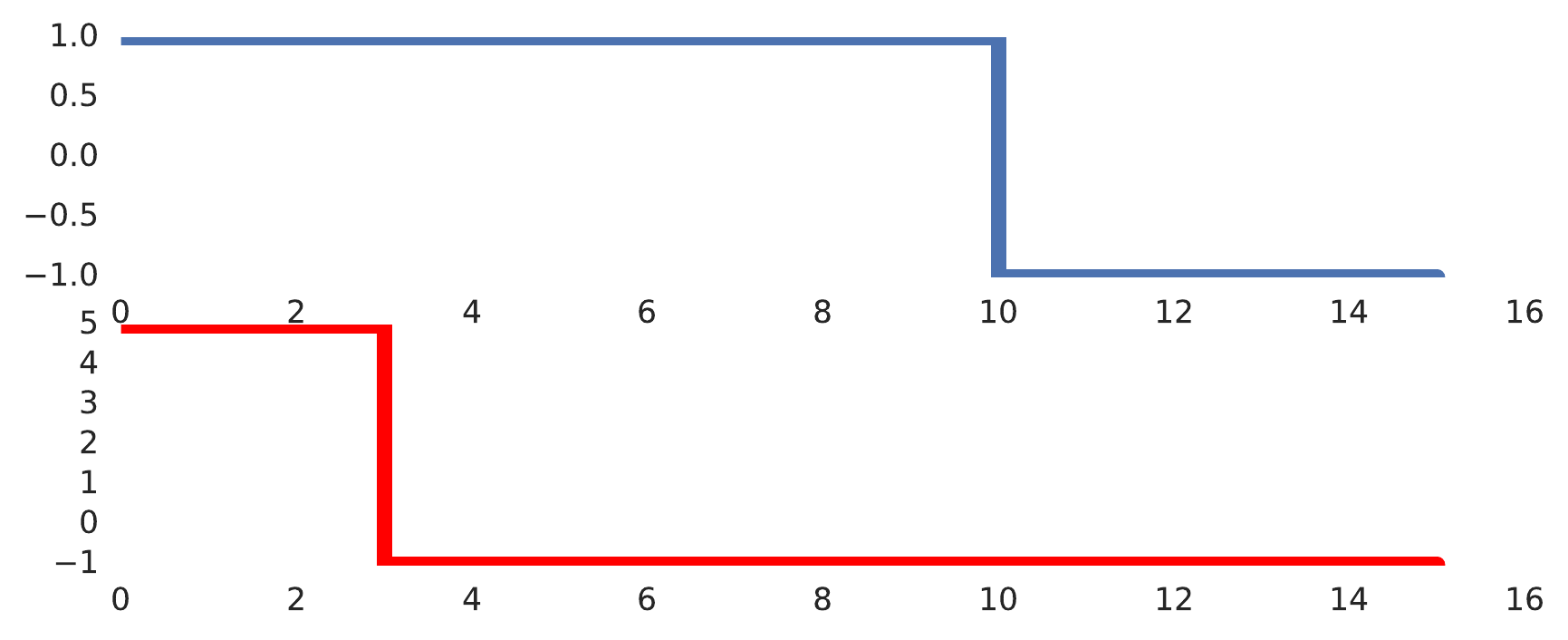}
}
\caption{Results on the foraging world using $m=2$ nutrients and $3$ types of food items: $\y_1 = (1,0)$, $\y_2 = (0,1)$, and $\y_3 = (1,1)$. Shaded regions are one standard deviation over $10$ runs. \label{fig:add_results_foraging_world34}}
\end{figure*}

Next we are going to look at a slightly more interesting scenario, where the desirability function changes over time as a function of the player's inventory. An intuitive scenario is captured in $A3$ (Figure \ref{fig:add_foraging_world3}) where for the second nutrient we will be considering a function that gives us positive reward until the number of units of this nutrient reaches a critical level---in this particular scenario $5$---, when the reward associated with it changes sign. The first nutrient remains constant with a positive value. We can think of the second nutrient here as something akin to certain types of food, like ''sugar'': at some point, if you have too much, it becomes undesirable to consume more. And thus you would have to wait until the leakage rate $l_i$ pushes this nutrient into a positive range before attempting to pick it up again. Conceptually this is a combination of the scenarios we have explored previously in Figure \ref{fig:add_results_foraging_world12}, but now the \textit{two situations would occur in the same episode}. Results are presented in Figure \ref{fig:add_results_foraging_world3}. As before, we can see that adding the synthesised option corresponding to $\w_3 = (1,-1)$, emulating the reward structure in the second part of the episode, gives us a considerable boost in performance as compared to the primitive set $\mathcal{W}_0$ (QO). Moreover, we can see again that the player converges considerably faster than the Q-learning agent which now encounters a switch in the reward structure based on inventory. This change in the desirability function makes this scenario considerably more challenging for the flat agent, while the player has the right level of abstraction to deal with this problem effectively.

The fourth scenario considered in this section is a small modification of the one above, where both nutrients have the ``sugar'' profile: they both start positive and at one point become negative---see Figure \ref{fig:add_foraging_world4}. We consider different thresholds at which this switch happens for each nutrient, to show that we can deal with asymmetries not present in pre-training. The results are shown in Figure \ref{fig:add_results_foraging_world4}.
Now we can see that this small modification leads to a very different learning profile. The first thing to notice is that the player based on only primitive options, QO, does very poorly in this scenario. This is because this player can only act based on options that pick up items and due to the length of our episodes ($300$) this player will find itself most of the time in the negative range of these desirability functions. Moreover the player will be unable to escape this range as it will continue to pick up items, resulting in more nutrients being added to its inventory, since these are the only options available to it. On the other hand we can see that by considering combinations of these options our players can do much better. In particular, given the above desirability functions, we expect negative combinations to be helpful. And, indeed, when we add $\w_2$, $\w_3$ or $\w_4$ to the set of primitive options, we can see that the resulting players QP(3)-2,3,4 perform considerably better than QO.   
Unsurprisingly, adding a positive-only combination, like $\w_1$, does not help performance, as even in the positive range this option would be suboptimal and will mimic the performance of the primitive set (as already seen in scenario A1, Figure \ref{fig:add_results_foraging_world0}). 
It is worth noting that in this case we are on par with QL, but keep in mind that this scenario was chosen a bit adversarially against our OK players. Remember this is a scenario where planning on top of the trained options alone would lead to a very poor performance. Nevertheless we have seen that by considering combinations of options, our OK players can achieve a substantially better performance. This is genuinely remarkable and illustrate the power of this method in combining options in cumulant space: \textit{even if the primitive options do not elicit a good plan, combined options synthesised on top of these primitive options can lead to useful behaviour and near optimal performance}.

\begin{figure*}
\centering
\newcommand{\scl}{0.4}
\subfigure[Scenario $1$  ]{
\includegraphics[scale=\scl]{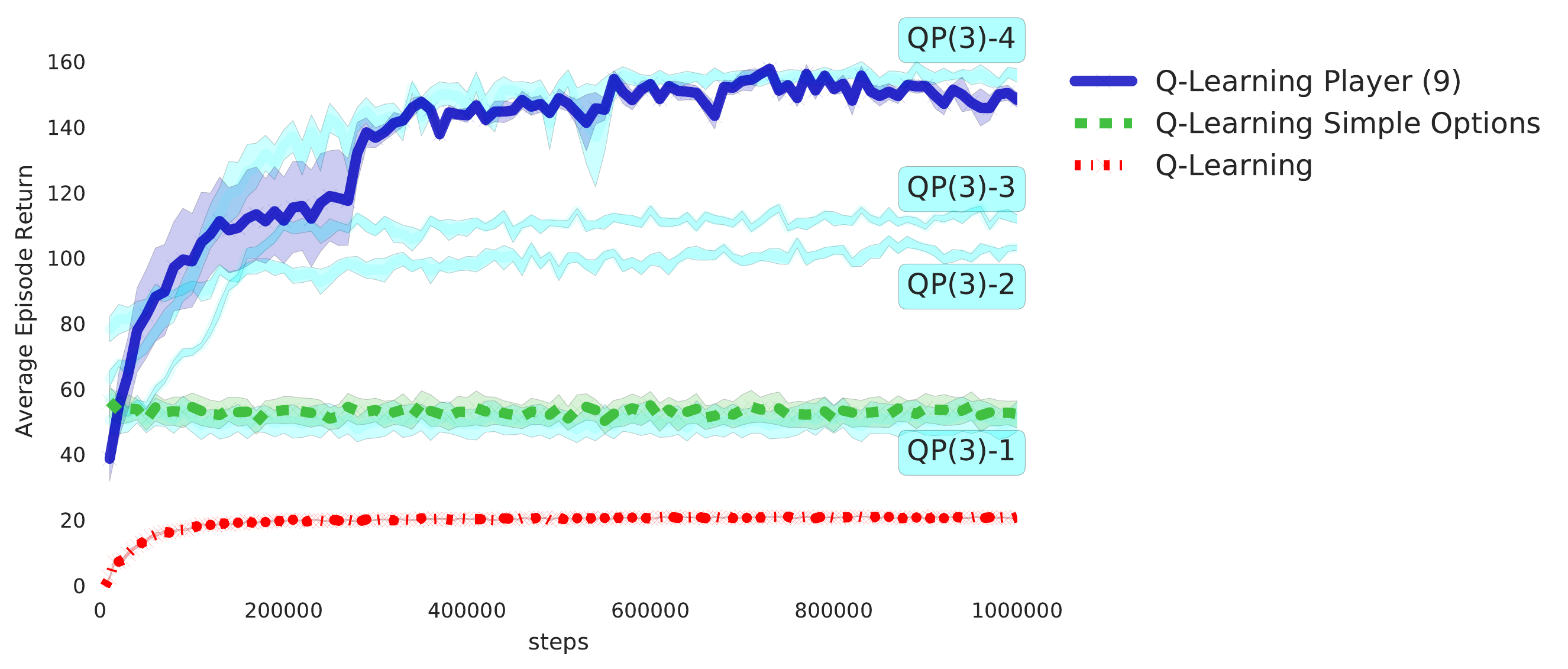}
} \hfill
\subfigure[Scenario $2$]{
\includegraphics[scale=\scl]{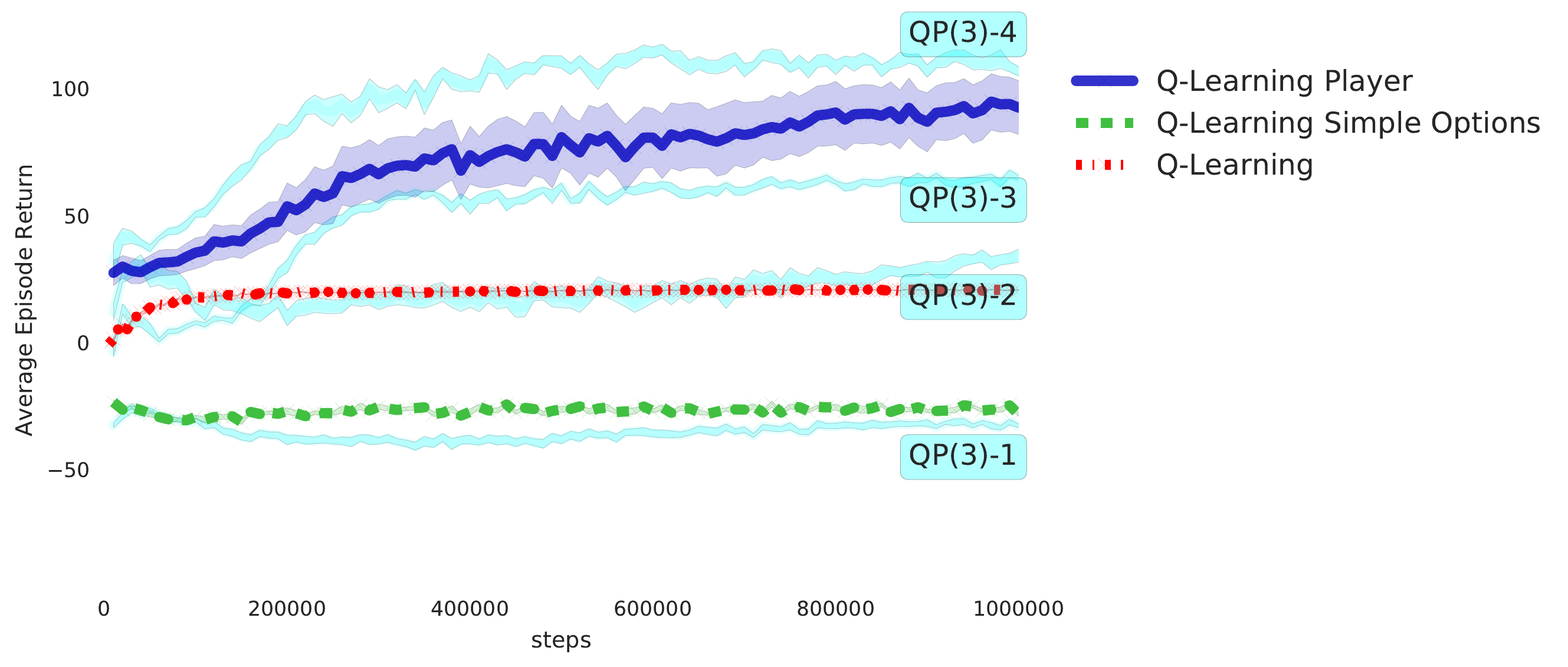}
}

\subfigure[Scenario $1$: $d_i(x_i)$]{
\includegraphics[width=0.44\textwidth]{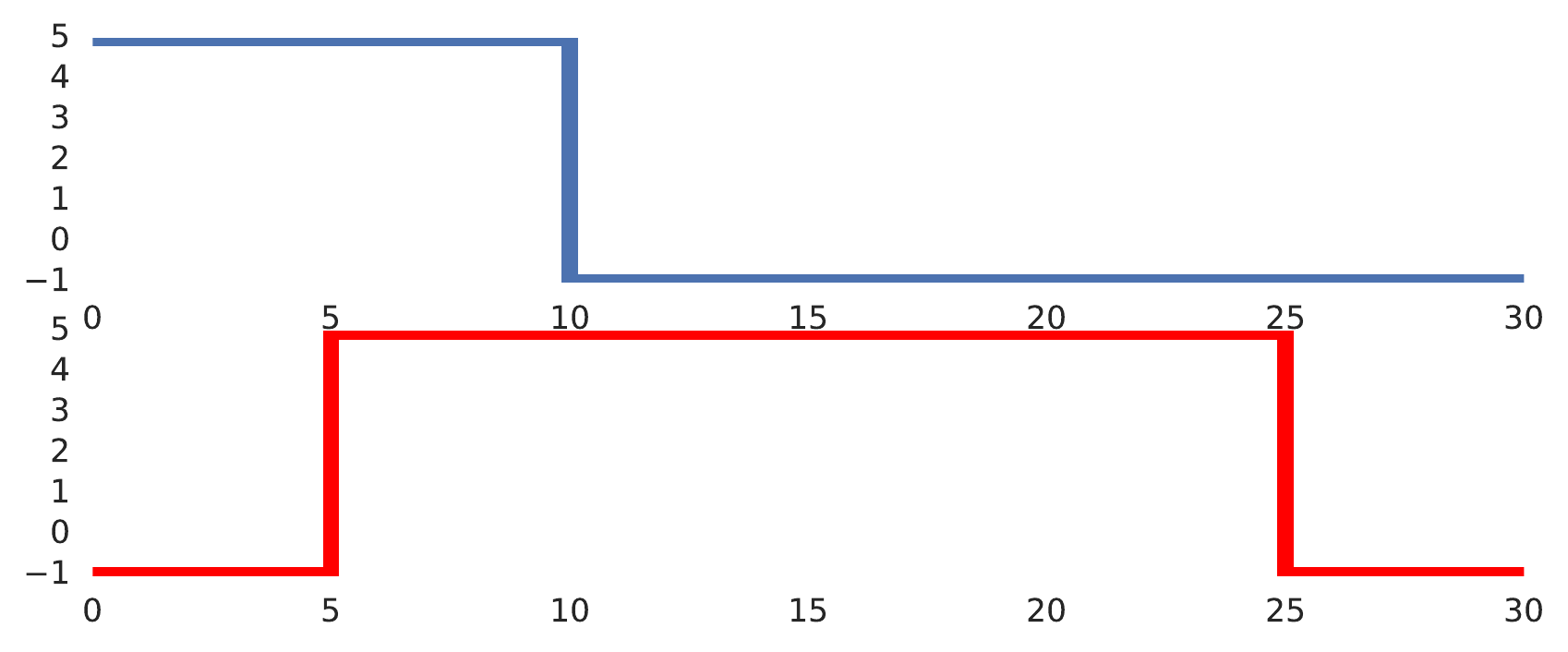}
} \hfill
\subfigure[Scenario $2$: $d_i(x_i)$]{
\includegraphics[width=0.44\textwidth]{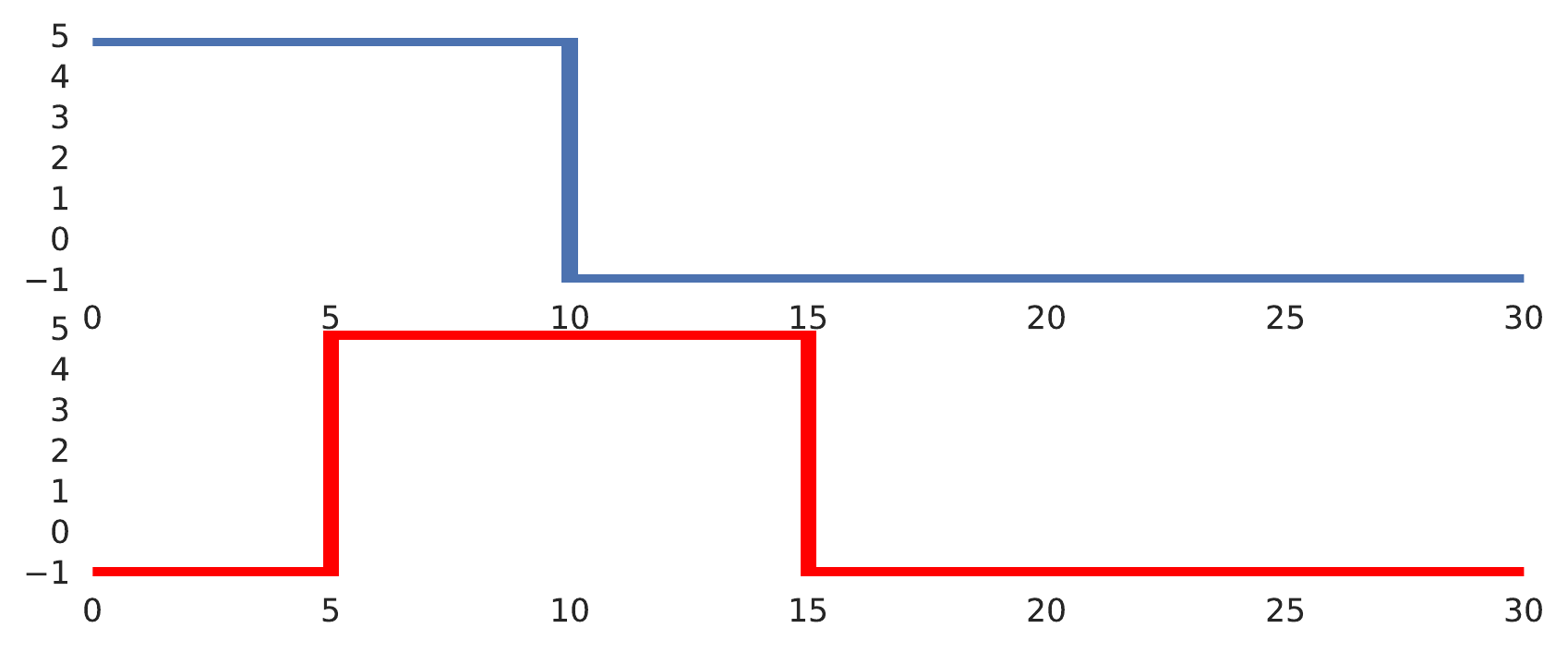}
}
\caption{Results on the foraging world using $m=2$ nutrients and $3$ types of items: $\y_1 = (1,0)$, $\y_2 = (0,1)$, and $\y_3 = (1,1)$. Shaded regions are one standard deviation over $10$ runs. \label{fig:paper_results_foraging_world}}
\end{figure*}

Lastly, for convenience we included the two scenarios presented in the main paper, as well as their desirability functions, in Figure \ref{fig:paper_results_foraging_world}. As in previous analysis, we include the performance of the QP(3)-$i$ players to illustrate how each of these combined options influences the performance. It is worth noting that in all of these scenarios, including the ones in the main text, the same keyboard $\Qec$ was (re-)used and that player QP(8), which considers a diverse set of the proposed combinations, can generally recover the best performance. This is an example of an agnostic player that learns to use the most useful combined options in its set, depending on the dynamics it observes in the environment. Moreover, in all of these scenarios we can clearly outperform or at least match the performance of the flat agent, QL, and the agent that only uses basic options, QO. This shows that the proposed hierarchical agent can effectively deal with structural changes in the reward function, by making use of the combined behaviour produced by GPE and GPI (Section \ref{sec:gpe_gpi}). 

\subsection{Moving-target arena}

Figure~\ref{fig:results_moving_target_arena_lenghts} shows additional OK's results on the moving-target arena as we change the length of the options. We vary the options' lengths by changing the value of $k$ in the definition of the cumulants~(\ref{eq:ecm_k_steps_policy}).
As a reference, we also show the performance of flat DPG in the original continuous action space \A. 

\begin{figure}
\centering
\subfigure[$Q$-learning players using $|\W|=8$ combined options]{
\includegraphics[scale=0.45]{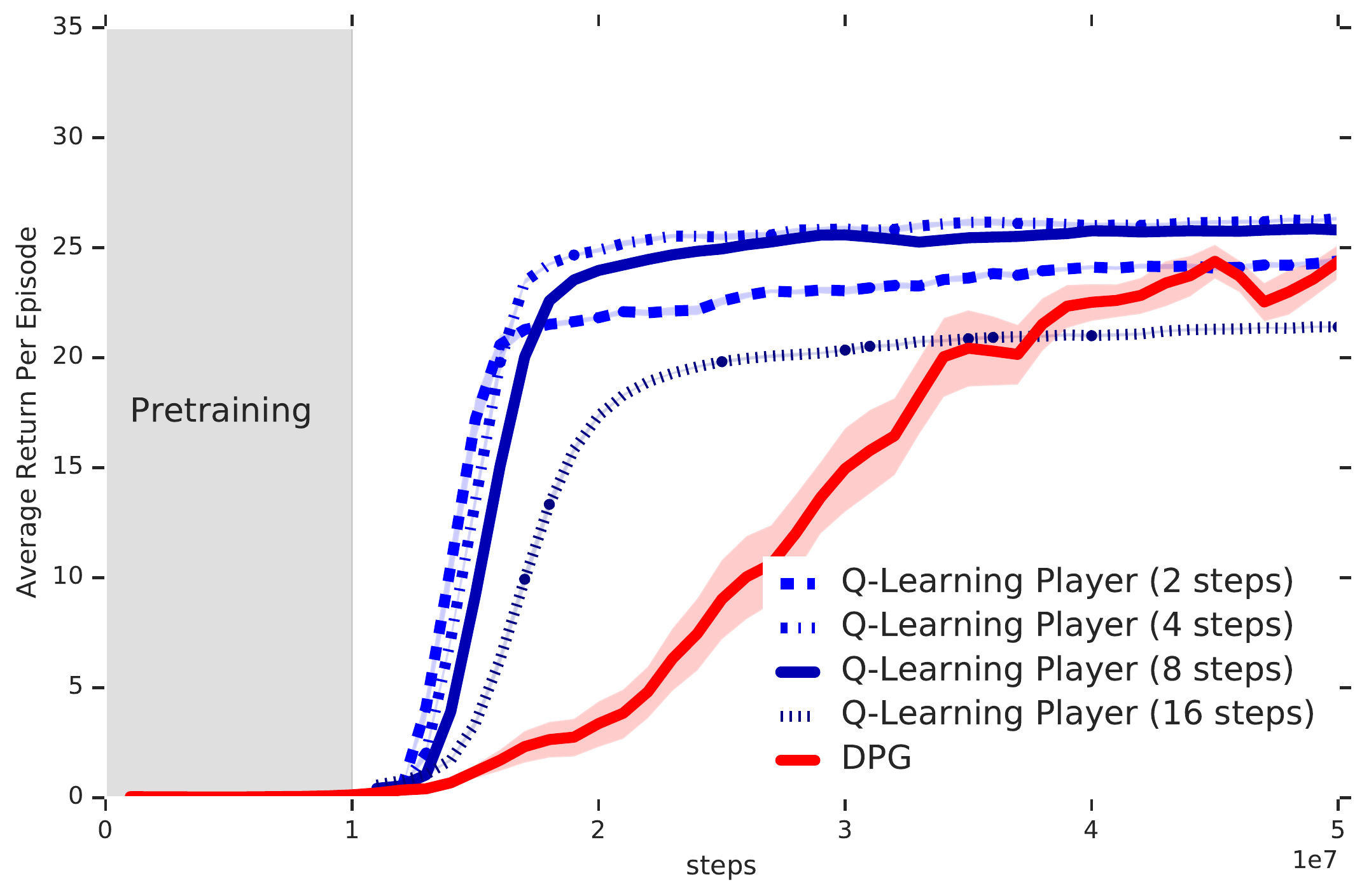}
} 
\subfigure[DPG player]{
\includegraphics[scale=0.45]{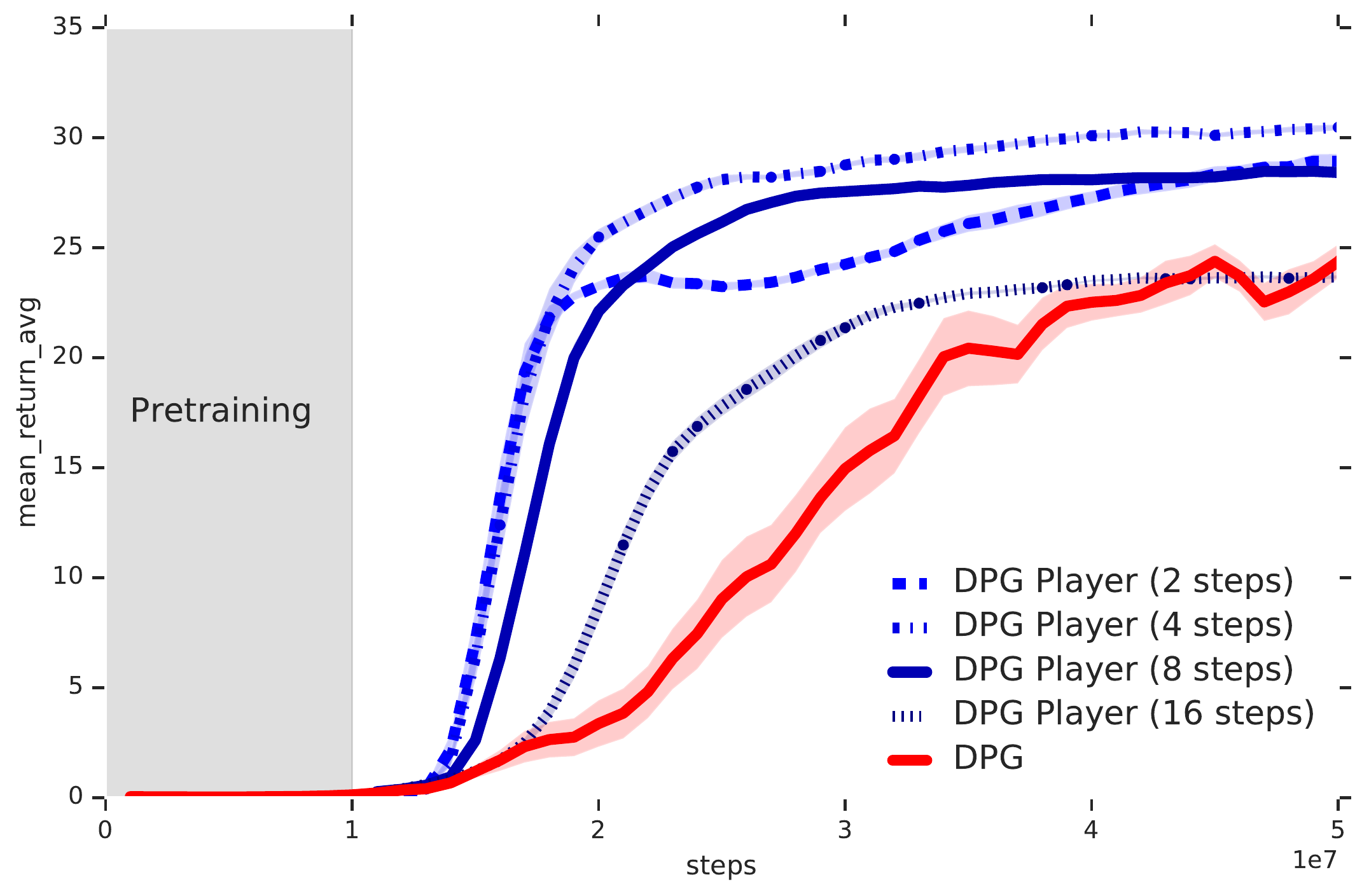}
} 
\caption{ Results on the moving-target arena for options of different lengths.  The number of steps corresponds to the value of $k$ in~(\ref{eq:ecm_k_steps_policy}). Shaded regions are one standard deviation over $10$ runs.  \label{fig:results_moving_target_arena_lenghts}}
\end{figure}

\makeatletter
\apptocmd{\thebibliography}{\global\c@NAT@ctr 34\relax}{}{}
\makeatother

\end{document}